\documentclass[runningheads]{llncs}
\usepackage[T1]{fontenc}
\usepackage{graphicx}
\usepackage{booktabs}
\usepackage[misc]{ifsym}
\newcommand{\corr}{(\Letter)}
\usepackage{hyperref}       
\usepackage{url}            
\usepackage{amsfonts}       

\usepackage{microtype}      
\usepackage{xcolor}         
\usepackage{notation}
\usepackage{tabularx}
\usepackage{multirow}
\usepackage{algorithm}
\usepackage{subfig}
\usepackage{float}
\usepackage{mathtools}

\usepackage[square,sort,comma,numbers]{natbib}
\usepackage{xr}

\makeatletter
\newcommand*{\addFileDependency}[1]{
\typeout{(#1)}
\@addtofilelist{#1}
\IfFileExists{#1}{}{\typeout{No file #1.}}
}
\makeatother



\begin{document}

\title{Improving Discriminator Guidance in Diffusion Models}

\titlerunning{Improving Discriminator Guidance in Diffusion Models}

\author{%
    Alexandre Verine \inst{1} \corr \and
    Ahmed Mehdi Inane \inst{2} \and
    Florian Le Bronnec \inst{3} \and
    Benjamin Negrevergne \inst{3} \and
    Yann Chevaleyre \inst{3}
}
\tocauthor{%
    Alexandre Verine, Ahmed Mehdi Inane, Florian Le Bronnec, Benjamin Negrevergne, Yann Chevaleyre
}
\toctitle{Improving Discriminator Guidance in Diffusion Models}


\authorrunning{A. Verine et al.}

\institute{
    École Normale Supérieure Paris, PSL University, Paris, France \email{alexandre.verine@ens.fr}
    \and
    Mila, Quebec AI Institute, Université de Montréal, Montreal, Canada \email{mehdi-inane.ahmed@mila.quebec}
    \and
    LAMSADE, CNRS, Université Paris-Dauphine-PSL, Paris, France \email{\{florian.le-bronnec,benjamin.negrevergne,yann.chevaleyre\}@dauphine.psl.eu}
}


\maketitle              

\begin{abstract}
    Discriminator Guidance has become a popular method for efficiently refining pre-trained Score-Matching Diffusion models. However, in this paper, we demonstrate that the standard implementation of this technique does not necessarily lead to a distribution closer to the real data distribution.  Specifically, we show that training the discriminator using Cross-Entropy loss, as commonly done, can in fact increase the Kullback-Leibler divergence between the model and target distributions, particularly when the discriminator overfits. To address this, we propose a theoretically sound training objective for discriminator guidance that properly minimizes the KL divergence. We analyze its properties and demonstrate empirically across multiple datasets that our proposed method consistently improves over the conventional method by producing samples of higher quality.\footnote{Code: \url{https://github.com/AlexVerine/BoostDM} and Supplementary Materials \url{https://arxiv.org/abs/2503.16117}}
    \keywords{Diffusion Models \and Discriminator Guidance}
\end{abstract}

\section{Introduction}

Diffusion based generative models have proven to be effective in numerous fields, including image and video generation \citep{dhariwal2021diffusion,kim2024consistency,ho2022imagen,ho2022video}, graphs \citep{liu2023generative}, and audio synthesis \citep{kong2021diffwave}, among others. Diffusion models are trained to iteratively denoise samples from a noise distribution to approximate the target data distribution. They have gained success in the generative modeling community for their ability to generate both high-quality samples, and to cover well the estimated distribution. However, this comes with a high computational cost both for sampling new data (which requires multiple denoising steps) and for training (which requires training the model multiple time for each level of denoising).

\begin{figure*}[!t]
    \centering
    \includegraphics[width=\textwidth]{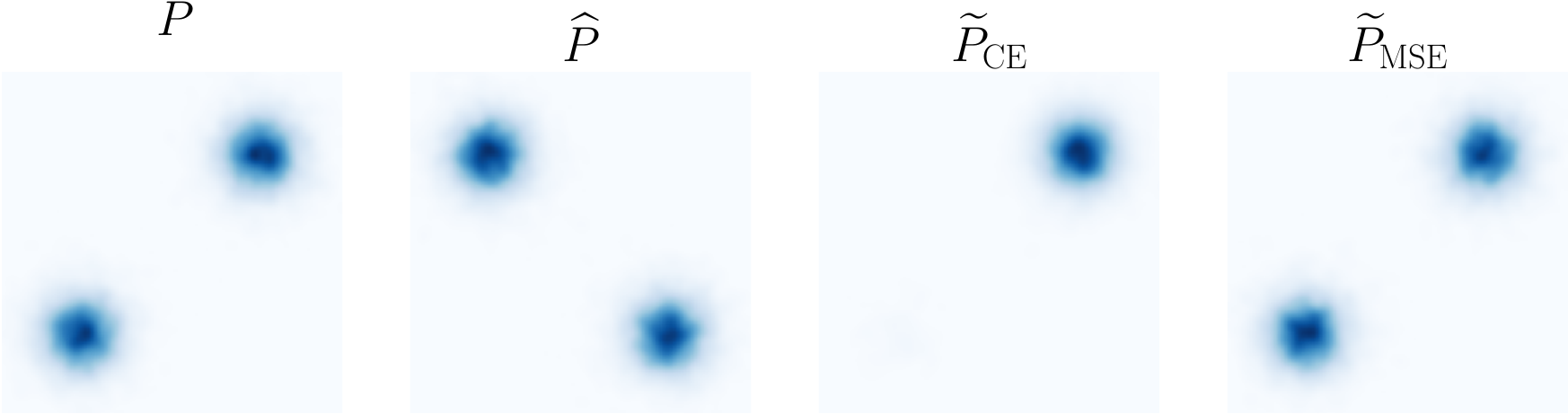}
    \caption{Illustration of generating samples by a reverse diffusion process to sample from the target distribution $P$ using true score function $\nabla \log p_t$, from the learned distribution $\whP$ using the learned score function $\vs_\theta$, from a poor approximation $\wtP_{\mathrm{CE}}$ using a refined score $\vs_\theta + \nabla d_\phi$ with low cross-entropy $\calL_{\mathrm{CE}^d(\phi)}$ and, finally, from a good approximation $\wtP_{\mathrm{MSE}}$ using a refined score $\vs_\theta +\nabla d_\phi$ with low loss $\calL_{\mathrm{MSE}^d(\phi)}$ introduced in Section~\ref{sec:improvement}. }
    \label{fig:2Dplots}
\end{figure*}

Consequently, several approaches focus on post-training strategies to refine a pre-trained model’s distribution at a lower computational cost \citep{song_score-based_2021,dhariwal_diffusion_2021,xie2023difffit,Lu_2023_CVPR}. Among these, Discriminator Guidance (DG) \citep{kim_refining_2023} has recently emerged as a promising refinement method, demonstrating strong empirical results \citep{kim_refining_2023,kelvinius2023discriminator,tsonis2023mitigating}. This approach refines the generation process by training a discriminator to distinguish generated samples from real samples at different diffusion steps. The discriminator is then used to estimate the log density ratio between the real data distribution \(P\) and the learned distribution \(\widehat{P}\), with the gradient of this estimate acting as a correction term during generation.

However, this introduces a fundamental discrepancy between training and inference: while the discriminator is trained to approximate the density ratio, inference relies on its gradient, which is \textbf{not necessarily a reliable estimate of the target gradient}. As a result, refining the pre-trained model in this way can degrade generation quality.

In this work, we make the following contributions:
\begin{itemize}
    \item We formally demonstrate in Theorem~\ref{thm:suboptimal} that training a discriminator to minimize Cross-Entropy and using it for refinement can lead to arbitrarily poor results in the final distribution.
    \item In Theorem~\ref{thm:overfitting}, we show that overfitting the discriminator is a sufficient condition for the refined distribution to deteriorate in terms of KL divergence compared to the pre-trained model.
    \item We propose a reformulation of the DG objective, introducing a new optimization criterion that directly improves the refined distribution by leveraging the gradient of the log-likelihood ratio rather than its value (its benefits over standard Cross-Entropy minimization are illustrated on Figure~\ref{fig:2Dplots}).
    \item Finally, we demonstrate that our method enhances sample quality in diffusion models across benchmark image generation datasets, including CIFAR-10, FFHQ, and AFHQ-v2.
\end{itemize}

By providing a theoretically grounded alternative to the standard DG objective, our approach improves both the understanding and effectiveness of discriminator-guided refinement in generative modeling.

\paragraph{Notations:} We denote $P$, $\whP$ and $\wtP$ the target, the learned and the refined distributions defined on $\reals^d$. Their densities are denoted $p$, $\whp$ and $\wtp$. We will denote with an indexed $t$ the diffused distribution $P_t$, $\whP_t$ and $\wtP_t$ at time $t$ and their densities $p_t$, $\whp_t$ and $\wtp_t$. We denote by $\KL$ the Kullback-Leibler divergence.
\section{Related Works}\label{sec:relatedworks}
In a trained diffusion model, the divergence between the learned distribution $\whP$ and the target distribution $P$ arises from two primary sources of error: \textit{sampling} errors, which stem from the discretization scheme used to solve the stochastic differential equation for sample generation, and \textit{estimation} errors, which originate from the discrepancy between the model’s final estimate of the score, $\nabla\log\whp_t$ and the target score $\nabla\log p_t$. This section reviews existing approaches that aim to mitigate this discrepancy by addressing one or both sources of error:
\begin{itemize}
    \item \textbf{Sampling strategies:} A first line of work attempts to correct the sampling error due to the discretization of the backward SDE (Equation ~\eqref{diffprocrev}). Since solving this equation numerically is fundamental to sample generation in diffusion models, the choice of the discretization scheme significantly impacts sample quality. Traditional solvers, such as the Euler-Maruyama method \citep{kloeden1992stochastic}, are widely used but can introduce bias or require a large number of function evaluations to achieve high fidelity. To address these limitations, researchers have explored improved numerical schemes specifically tailored to score-based generative modeling. For example, \citet{jolicoeur-martineau_gotta_2021} propose an adaptive step size strategy for the SDE solver, allowing for more efficient sample generation by dynamically adjusting the discretization step. This approach reduces numerical error and enables high-quality sample synthesis in fewer iterations compared to fixed-step solvers. \citet{xu_restart_2023} proposed the Restart sampling algorithm, which alternates between adding noise through additional forward steps and strictly following a backward ordinary differential equation (ODE). This approach balances discretization errors and the contraction property of stochasticity, resulting in accelerated sampling speeds while maintaining or enhancing sample quality.
    \item \textbf{Correcting the estimation:} Another line of work attempts to correct the estimated score function with additional information from auxiliary models. For instance, the classifier-guided approach refines score estimation by incorporating gradients from an external classifier trained to predict class probabilities. These gradients are used to adjust the predicted score of the diffusion model, steering the sampling process toward more semantically meaningful outputs \citep{dhariwal_diffusion_2021}. Since the estimation error term is given by the gradient of the log density ratio between the target distribution $P$ and model distribution $\whP$, \citet{kim_refining_2023} introduced \textit{discriminator guidance}, a method leveraging a traditional density estimation technique by training a discriminator $d_\phi$. The gradient of the output of $d_\phi$ is then added to the score estimate during sampling, which provably reduces the discrepancy between $P$ and $\whP$. This method builds on a broad body of literature that leverages discriminators to estimate the log-density ratio for enhancing the generation process of Generative Adversarial Networks \cite{azadi_discriminator_2019,ansari_refining_2021,verine_precision-recall_2023,verine_optimal_2024,che_your_2020}.Recent works have proposed complementary algorithmic improvements to this approach: \citet{kelvinius2023discriminator} propose a sequential Monte-Carlo based algorithm to correct the estimation errors of the discriminator for autoregressive diffusion models, and \citet{tsonis2023mitigating} propose a hybrid algorithm that merges discriminator guidance and a scaling factor to mitigate the exposure bias induced by the training process of diffusion models. Our work focuses on improving the training of the discriminator itself, and can be integrated with all the aforementioned methods to further improve sample quality and model performance.
\end{itemize}

\section{ Background on Score-matching Diffusion Models}\label{sec:background}
\subsection{Diffusion Process}
Let $\{\vx_t\}_{t\in[0, T]}$ be a diffusion process defined by an \^Ito SDE:
\begin{align}\label{diffproc}
    \dx_t= \vf(\vx_t, t)\dt + g(t)\dw,
\end{align}
where $\vf(., t)\in \reals^d\to\reals^d$ is the drift, $g(t):\reals\to \reals$ is the diffusion coefficient and $\vw\in \reals^d$ is a standard Wiener process. It defines a sequence of distributions $\left\{P_t\right\}_{t\in[0, T]}$, with densities $p_t$. With this definition, the target distribution is $P=P_0$ and $\vf$, $g$ and $T$ are chosen such that $P_T$ tends toward a tractable distribution $Q$ with density $q$. In practice, $Q$ is a normal distribution in $\reals^d$. Using the time-reversed diffusion process \cite{anderson_reverse-time_1982} of Equation~\eqref{diffproc} we can define a generation scheme to sample from the target distribution $P$.  We have the reverse SDE :
\begin{align}\label{diffprocrev}
    \dx_t = \left[f(\vx_t, t) - g(t)^2\nabla_{\vx_t}\log p_t (\vx_t)\right]\dt +  g(t)\d \bar{\vw},
\end{align}
where $\d \bar \vw$ denotes a different standard Wiener process.
\subsection{Score-Based Generative Models}
Taking advantage of the reverse SDE in Equation~\eqref{diffprocrev}, we can train generative models based on scores by training a neural network $\vs_\theta(\vx_t, t)$ to estimate the value of the score $\nabla_{\vx_t} \log p_t(\vx_t)$. To do so, we train U-Net  \citet{ronneberger_u-net_2015}, a function $\reals^d\times\reals \to \reals^d$, to minimize the \emph{score matching loss} \citep{hyvarinen_estimation_2005}:
\begin{align}\label{eq:scorematching}
    \calL_{\mathrm{SM}}^{\vs}(\theta) = \frac{1}{2}\int_0^T\lambda(t)
    \E_{P_t}\left[\Vert \nabla_{\vx_t} \log p_t(\vx_t) - \vs_\theta(\vx_t, t)\Vert_2^2 \right]\dt.
\end{align}
Note that the weighting function $\lambda:[0,T]\to]0,+\infty[$ depends on the type of SDE. Although the score matching loss is not directly computable, there exist methods to estimate it using Sliced Score Matching \citep{song_sliced_2019}. \citet{vincent_connection_2011} shows that the score matching loss is equivalent, up to a constant additive term independent of $\theta$, to the \emph{denoising score matching} loss:
\begin{align}\label{eq:score_matching_loss}
    \calL_{\mathrm{MSE}}^{\vs}(\theta) = \int_0^T\lambda(t)   \E_{P_0, P_{t\vert\vx_0}}\left[\Vert \nabla_{\vx_t} \log p_t(\vx_t\vert \vx_0) - \vs_\theta(\vx_t, t)\Vert_2^2 \right]\dt,
\end{align}
where $P_{t\vert\vx_0}$ is conditional distribution of $\vx_t$ given $\vx_0$. This distribution is typically chosen to be a Gaussian distribution with a mean depending on $\vx_0$ and $t$ and a variance depending on $t$. Therefore, this loss is based on the mean square error of the denoised reconstruction, which is widely used in practice as it is easier to compute and optimize \citep{karras_elucidating_2022}. The learned score function defines a new reverse diffusion process:
\begin{align}\label{diffprocrevlearned}
    \dx_t = \left[\vf(\vx_t, t) - g(t)^2\vs_\theta(\vx_t, t)\right]\dt +  g(t)\d \bar{\vw}.
\end{align}
It defines  learned distributions $\left\{\whP_t\right\}_{t\in[0,T]}$ with densities $\whp_t$ such that:
\begin{align}
    \begin{cases}
        \whp_T(\vx_T) = q(\vx_T), \\
        \nabla_{\vx_t} \log \whp_t (\vx_t, t) =\vs_\theta(\vx_t, t).
    \end{cases}
\end{align}
In general, the induced distribution $\whP_0=\whP$ does not perfectly match the target distribution. \citet{song_maximum_2021} shows that under the right assumptions (detailed in Appendix~\ref{app:sec:assumptionssong}), the Kullback-Leibler divergence is given by:
\begin{align}\label{eq:klphat}
    \KL(P\Vert\whP)  & = \KL(P_T\Vert Q)
    +    \frac{1}{2} & \int_0^T g^2(t)\E_{P_t}\left[\Vert \nabla_{\vx_t} \log p_t(\vx_t) - \vs_\theta(\vx_t, t)\Vert_2^2 \right]\dt.
\end{align}
In practice, the score function $\vs_\theta$ is learned using a U-Net architecture \citep{ronneberger_u-net_2015} trained to minimize $\cal L_{\mathrm{MSE}}^{\vs}$ and thus helps to reduce the dissimilarity between the learned distribution $\whP$ and the target distribution $P$.

\subsection{Discriminator Guided Diffusion}
To enhance the generative process and minimize the discrepancy between the generated distribution $\widehat{P}$ and the target distribution $P$, \citet{kim_refining_2023} propose leveraging the density ratio $p_t(\vx_t)/\widehat{p}_t(\vx_t)$. By incorporating this density ratio, the score estimation can be refined using the following identity:
\begin{align}
    \nabla_{\vx_t} \log p_t(\vx_t) = \nabla_{\vx_t} \log  \whp(\vx_t)+\nabla_{\vx_t} \log p_t(\vx_t)/\whp_t(\vx_t).
\end{align}
However, the density ratio cannot be computed directly. To address this, the authors propose training a discriminator to approximate it. In practice, the discriminator is implemented as a neural network $\vd_\phi(\vx_t, t)$, which is trained to minimize the cross-entropy (CE) loss between real and generated data:
\begin{align}\label{eq:CE}
    \begin{split}
        \calL_{\mathrm{CE}}^{d}(\phi) = \int_0^T\lambda(t)\Big[  \E_{P_t}\left[-\log \sigma\left(\vd_\phi(\vx_t, t)\right)\right]
            +                                                        \E_{\whP_t}\left[-\log\left(1- \sigma\left(\vd_\phi(\vx_t, t)\right)\right)\right] \Big]\d t,
    \end{split}
\end{align} where $\sigma$ is the sigmoid function.If the discriminator $\vd_{\phi}(.,t)$ were capable of representing any measurable function from $\mathbb{R}^d$ to $\mathbb{R}$, then the optimal discriminator could be used to compute the density ratio \citep{sugiyama_density_2010, nowozin_f-gan_2016, nguyen_surrogate_2009}:
\begin{align}
    r\s(\vx_t, t) = p_t(\vx_t)/\whp_t(\vx_t)=e^{\vd_{\phi\s}(\vx_t, t)}.
\end{align}
However, in practice, the expressivity of the discriminator is limited, and thus the discriminator does not perfectly estimate the density ratio, and the estimated density ratio is defined as $r_{\phi}(\vx_t, t) = \exp(\vd_\phi(\vx_t, t))$. Using this estimation, the score refinement can be computed as$\nabla_{\vx_t} \log r_{\phi}(\vx_t, t) = \nabla_{\vx_t} \vd_{\phi}(\vx_t, t)$. The reverse diffusion process using the discriminator guidance defines a sequence of refined distributions $\left\{\wtP_t\right\}_{t\in[0,T]}$, with densities $\wtp_t$ such that:
\begin{align}
    \begin{cases}
        \wtp_T(\vx_T) = \pi(\vx_T), \\
        \nabla_{\vx_t} \log \wtp_t (\vx_t, t) =\vs_\theta(\vx_t, t) +\nabla_{\vx_t}\vd_\phi(\vx_t, t)\coloneq\widetilde{\vs}_{\theta,\phi}(\vx_t, t).
    \end{cases}
\end{align}
Similarly to the classical generative process, $\wtP = \wtP_0$ does not perfectly match the target distribution. The Kullback-Leibler divergence between the refined distribution and the target distribution can be computed as follows applying the same assumptions as Equation~\eqref{eq:klphat}.
\begin{align}\label{eq:klptilde}
    \KL(P \Vert \wtP) =  \KL(P_T \Vert Q) + \frac{1}{2}\int_0^T g(t)^2 \E_{P_t}\bigg[\Vert \nabla_{\vx_t} \log p_t(\vx_t)  - \widetilde{\vs}_{\theta,\phi}(\vx_t, t)\Vert_2^2 \bigg]\dt.
\end{align}
In Equation~\eqref{eq:klptilde}, we note that the dissimilarity between the target distribution and the refined distribution depends on the MSE between the difference in scores and the discriminator \textit{gradient}. However, the model is trained to minimize CE and there is no guaranty that the CE is the optimal loss for the refinement.

\section{Misalignment between the cross-entropy and the Kullback-Leibler divergence}\label{sec:optimality}

The DG framework \citep{kim_refining_2023} approximates the density ratio $p_t/\whp_t$ by training a discriminator with the CE and using its gradient for refinement. In this section, we formally show that minimizing cross-entropy does not necessarily improve the refined distribution. Furthermore, we establish that this issue is not limited to pathological cases but naturally arises in the common overfitting regime, where the refined distribution deteriorates.

\subsubsection{Well-trained discriminator with poorly refined distribution:}
Our first result, stated in Theorem~\ref{thm:suboptimal}, shows that it is possible to construct a discriminator with an arbitrarily low CE, while the refined distribution is arbitrarily far from the target:\begin{theorem}\label{thm:suboptimal}
    Let $\left\{\vx(t)\right\}_{t\in[0, T]}$ be a diffusion process defined by Equation~\eqref{diffproc}. Assume that $\nabla \log \whp_t= \vs_\theta$ and $\nabla \log \wtp_t =\vs_\theta +\nabla d_\phi $ and that the induced distribution $P$, $\whP$, and $\wtP$ satisfies the assumptions detailed in Appendix~\ref{app:sec:assumptionssong}. Then, for every $\varepsilon>0$ and for every $\delta>0$, there exists a discriminator $\vd:\reals^d\times\reals\to\reals$ trained to minimize the cross-entropy such that:
    \begin{equation}
        \calL_{\mathrm{CE}}^{d}(\phi)\leq \varepsilon \et \KL(P\Vert \widetilde{P}) \geq \delta,
    \end{equation}
    where $\widetilde{P}$ is the distribution induced by discriminator guidance with $\vd$.
\end{theorem}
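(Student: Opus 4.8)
The plan is to exploit a structural mismatch that is already visible in Equation~\eqref{eq:klptilde}: the cross-entropy $\calL_{\mathrm{CE}}^{d}$ only sees the pointwise \emph{values} of $\vd_\phi$ (through $\sigma(\vd_\phi)$), whereas the divergence of the refined model sees only its \emph{gradient}. Substituting $\widetilde{\vs}_{\theta,\phi}=\vs_\theta+\nabla\vd_\phi=\nabla\log\whp_t+\nabla\vd_\phi$ into Equation~\eqref{eq:klptilde} gives
\begin{equation*}
  \KL(P\Vert\wtP)=\KL(P_T\Vert Q)+\frac{1}{2}\int_0^T g(t)^2\,\E_{P_t}\!\left[\bigl\Vert\nabla_{\vx_t}\log\bigl(p_t(\vx_t)/\whp_t(\vx_t)\bigr)-\nabla_{\vx_t}\vd_\phi(\vx_t,t)\bigr\Vert_2^2\right]\dt .
\end{equation*}
A uniformly small additive perturbation of $\vd_\phi$ moves $\calL_{\mathrm{CE}}^{d}$ by an arbitrarily small amount but can make $\nabla\vd_\phi$ arbitrarily large; this is the whole idea, and the proof just has to make it quantitative.

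For the construction, first pick the diffusion process (SDE coefficients and horizon $T$) together with $P$ and $\whP$ so that the diffused pairs $P_t,\whP_t$ remain almost mutually singular throughout $[0,T]$ --- for instance, take $P$ and $\whP$ to be Gaussians whose means are separated enough (relative to $T$, $\vf$ and $g$) that the diffused densities stay supported on essentially disjoint regions. Then the Bayes-optimal discriminator $\vd^\star(\vx,t)=\log\bigl(p_t(\vx)/\whp_t(\vx)\bigr)$ attains $\calL_{\mathrm{CE}}^{d^\star}=\int_0^T\lambda(t)\bigl(2\log 2-2\,\mathrm{JSD}(P_t\Vert\whP_t)\bigr)\dt\le\varepsilon/2$, and the integral in the identity above vanishes for it. Now set
\begin{equation*}
  \vd(\vx,t)=\vd^\star(\vx,t)+\eta\,\chi(t)\,\sin\!\bigl(\omega\,e_1^{\top}\vx\bigr),
\end{equation*}
where $e_1$ is the first canonical basis vector, $\chi$ a fixed smooth bump supported on a compact subinterval $I\subset[0,T]$, $\eta>0$ a small amplitude and $\omega$ a large frequency. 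Since $z\mapsto-\log\sigma(z)$ and $z\mapsto-\log(1-\sigma(z))$ are $1$-Lipschitz, $\calL_{\mathrm{CE}}^{d}\le\calL_{\mathrm{CE}}^{d^\star}+2\eta\Vert\chi\Vert_\infty\int_0^T\lambda(t)\dt\le\varepsilon$ for $\eta$ small enough; on the other hand $\nabla_{\vx}\bigl[\vd-\vd^\star\bigr]=\eta\omega\,\chi(t)\cos(\omega\,e_1^{\top}\vx)\,e_1$, so the integral in the identity equals $\tfrac12\eta^2\omega^2\int_{I}g(t)^2\chi(t)^2\,\E_{P_t}[\cos^2(\omega\,e_1^{\top}\vx)]\,\dt$.

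It remains to lower bound the oscillatory expectation. Writing $\cos^2=\tfrac12(1+\cos(2\,\cdot\,))$ and using that the first marginal of $P_t$ has a Gaussian (hence $L^1$) density with variance bounded below on the compact set $I$, the Riemann--Lebesgue lemma gives $\E_{P_t}[\cos^2(\omega\,e_1^{\top}\vx)]\ge 1/4$ for every $t\in I$ once $\omega$ is large, uniformly in $t$. Hence the integral in the identity is at least $\tfrac18\eta^2\omega^2\int_{I}g(t)^2\chi(t)^2\dt$, which exceeds $\delta$ for $\omega$ large; combined with $\KL(P_T\Vert Q)\ge 0$ this gives $\KL(P\Vert\wtP)\ge\delta$ while $\calL_{\mathrm{CE}}^{d}(\phi)\le\varepsilon$, as claimed. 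The main obstacle is precisely the base step --- arranging $\inf_d\calL_{\mathrm{CE}}^{d}\le\varepsilon/2$ --- because the cross-entropy is bounded below by a Jensen--Shannon term that is strictly positive as soon as $P_t$ and $\whP_t$ overlap, which is what forces the construction to be carried out on a configuration where $P$ and $\whP$ stay almost singular over the whole horizon; one then has to verify that such a configuration (and the accompanying uniform-in-$t$ control of the oscillatory expectation, together with the passage of the high-frequency limit inside $\int_0^T\lambda(t)g(t)^2\dt$) is still compatible with the assumptions of Appendix~\ref{app:sec:assumptionssong}.
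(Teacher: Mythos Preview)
Your core mechanism --- perturb the optimal discriminator by a small-amplitude, high-frequency oscillation so that the \emph{values} barely move while the \emph{gradient} explodes --- is exactly the paper's idea, and your Riemann--Lebesgue lower bound for the oscillatory expectation is a clean way to finish the blow-up step. The substantive difference is in how you read the condition $\calL_{\mathrm{CE}}^{d}(\phi)\le\varepsilon$.

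You take it literally as an absolute bound on the cross-entropy. Since $\inf_d\calL_{\mathrm{CE}}^{d}=\int_0^T\lambda(t)\bigl(2\log 2-2\,\mathrm{JSD}(P_t\Vert\whP_t)\bigr)\dt>0$ whenever $P_t$ and $\whP_t$ overlap, you are forced to manufacture a diffusion and a pair $(P,\whP)$ that stay almost mutually singular on the whole horizon --- and you correctly flag this as the main obstacle. The price is that $P,\whP$ (and $T,\vf,g$) end up chosen \emph{after} $\varepsilon$, which inverts the quantifier order in the statement (the process and $\vs_\theta$ are fixed before $\varepsilon,\delta$).

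The paper sidesteps this entirely by reading ``$\calL_{\mathrm{CE}}^{d}(\phi)\le\varepsilon$'' (in conjunction with ``trained to minimize the cross-entropy'') as the \emph{optimality gap} $\calL_{\mathrm{CE}}^{d}(\phi)-\calL_{\mathrm{CE}}^{d^\star}\le\varepsilon$. It expresses that gap via the Bregman-divergence identity for the density-ratio loss, then perturbs the ratio by $h(\vx)=\sin(\omega\vx)\sqrt{\varepsilon\,r^\star(\vx)}$ and checks that the pointwise Bregman residual is at most $h^2/r^\star\le\varepsilon$, independently of $\omega$. Sending $\omega\to\infty$ blows up the score-matching integrand exactly as in your argument, with no constraint on $(P,\whP)$ and the quantifiers in the stated order.

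In short: same oscillation trick, different bookkeeping. Your additive perturbation of the logit together with the $1$-Lipschitz bound on $-\log\sigma$ is in fact more elementary than the Bregman computation, but it only delivers $\calL_{\mathrm{CE}}^{d}\le\calL_{\mathrm{CE}}^{d^\star}+O(\eta)$ --- which is already the gap bound the paper is after. If you declare \emph{that} to be the target (as the paper effectively does), your proof collapses to the single paragraph on the gradient blow-up and the whole almost-singular construction becomes unnecessary.
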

\begin{hproof}The detailed proof of Theorem~\ref{thm:suboptimal} is provided in Appendix~\ref{app:sec:proofsuboptimal}. The key insight is that the CE evaluates the discriminator's values \( d_{\phi}(\vx, t) \) but not its gradient \( \nabla_{\vx} d_{\phi}(\vx, t) \), which affects the generation process. The main argument is that a learned discriminator \( d_{\phi} \) oscillating around the optimal discriminator \( d^* \) can still achieve a low CE (Figure~\ref{fig:thm1}). Specifically, the magnitude of these oscillations determines how low the CE is, while their frequency degrades the approximation of \( \nabla_{\vx} d_{\phi}(\vx, t) \), leading to an increase in $\KL(P\Vert \widetilde{P})$.
\end{hproof}
Theorem~\ref{thm:suboptimal} establishes that minimizing cross-entropy does not necessarily
yield a better-refined distribution. Theorem~\ref{thm:overfitting} further demonstrates that this issue is not limited to rare or pathological cases but naturally emerges in the overfitting regime, a common occurrence in practical settings. As the discriminator memorizes the training data, its learned function develops high-frequency oscillations:
\begin{figure}[b!]
    \subfloat[High gradient MSE despite low cross-entropy (Theorem~\ref{thm:suboptimal})]{%
        \includegraphics[width=0.3\textwidth]{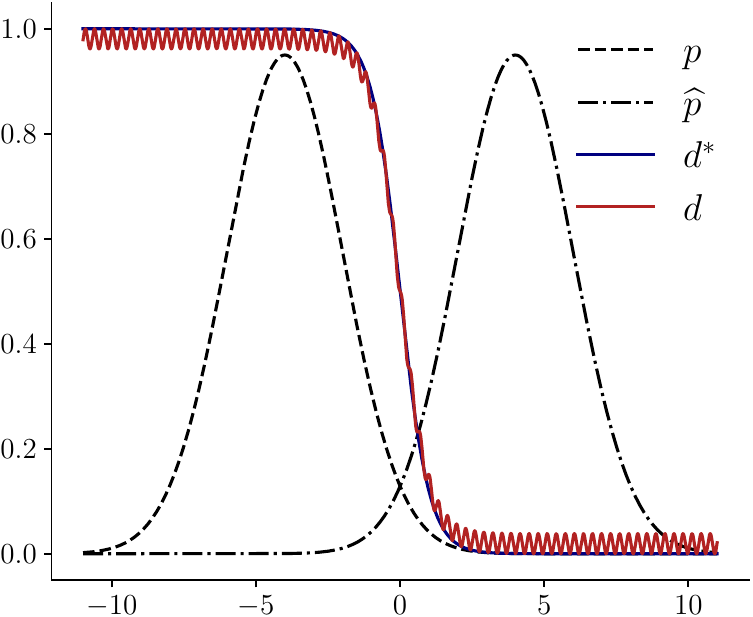}\label{fig:thm1}}\hfill
    \subfloat[Low gradient MSE due to small distribution overlap (Theorem~\ref{thm:overfitting})]{%
        \includegraphics[width=0.3\textwidth]{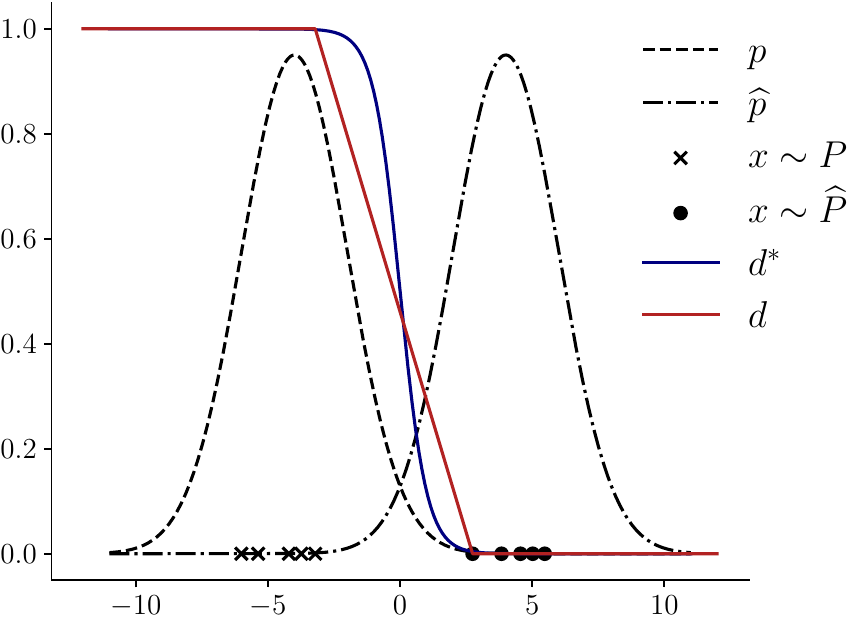}\label{fig:thm2a}}\hfill
    \subfloat[High gradient MSE from pathological overfitting (Theorem~\ref{thm:overfitting})]{%
        \includegraphics[width=0.3\textwidth]{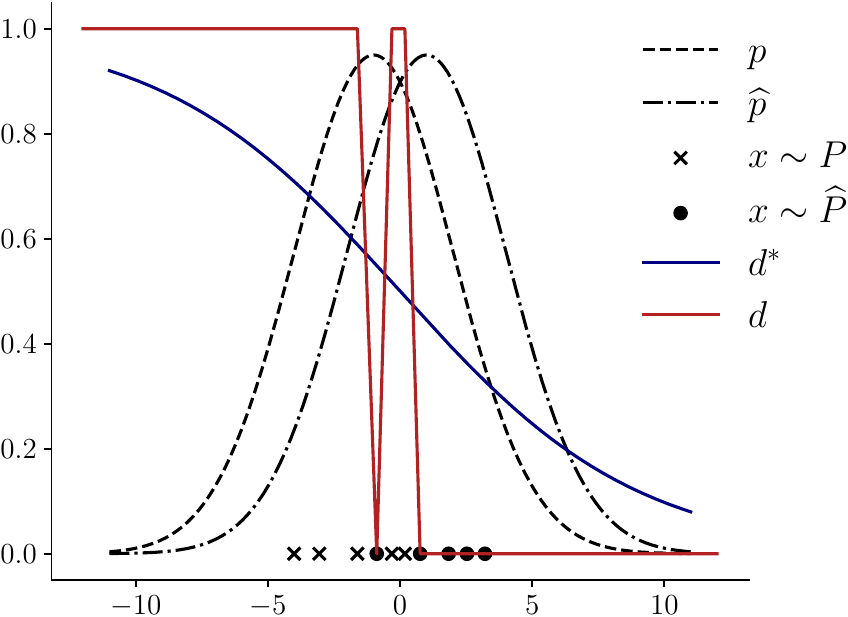}\label{fig:thm2b}}
    \caption{
    Illustration of pathological cases from Theorem~\ref{thm:suboptimal} and Theorem~\ref{thm:overfitting}.
    (Left) The cross-entropy loss \( \mathcal{L}_{\text{CE}}^d(\phi)\) is low, yet the MSE loss \( \mathcal{L}_{\text{MSE}}^d(\phi)\) is high due to substantial gradient mismatch.
    (Middle) Low cross-entropy loss with low MSE, as distributions minimally overlap.
    (Right) Despite low cross-entropy loss, the MSE loss diverges due to significant overlap, highlighting pathological overfitting.
    }
\end{figure}
\begin{theorem}\label{thm:overfitting}
    Let $P$ and $\widehat{P}$ be distributions on $\mathbb{R}$ with intersecting supports, admitting $L$-Lipschitz densities $p$ and $\hat{p}$. Let $x_{1},\dots,x_{N}\sim P^{N}$ and
    $x'_{1},\dots,x'_{N}\sim \widehat{P}^{N}$. Assume that there exists $\epsilon>0$ such that the discriminator $d_{\phi}$ achieves logistic loss for each sample
    \[
        -\log\sigma(d_{\phi}(x_i)) \et -\log(1-\sigma(d_{\phi}(x'_i)))\;\leq\; \epsilon.
    \]
    Then there exists a constant $c>0$, depending asymptotically on $\log^2(\epsilon)$ as $\epsilon\to 0$, such that:
    \[
        \lim_{N\rightarrow\infty}\frac{\mathbb{E}\left[\mathcal{L}_{\mathrm{MSE}}^d(\phi)\right]}{N}\;=\; c,\quad\text{with}\quad c \sim (1-TV(P,\wh{P}))^4\log^2(\epsilon),
    \]
    where $\TV(P,\wh{P})$ denotes the total variation distance between $P$ and $\wh{P}$. In other words, when $\TV(P,\wh{P}) >0$, as the discriminator overfits the cross-entropy loss ($\epsilon \to 0$), the mean-squared error scales as $N\log^2(\epsilon)$ and becomes arbitrarily large.
\end{theorem}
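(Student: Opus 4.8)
The plan is to convert the per-sample logistic-loss hypothesis into a pointwise constraint on $d_\phi$, localize the analysis to the region where the supports of $P$ and $\widehat P$ overlap, and then use a spacings / order-statistics argument to show that forcing $d_\phi$ to take extreme values at $2N$ densely interleaved points makes its derivative, hence the objective $\mathcal{L}_{\mathrm{MSE}}^d$ of Section~\ref{sec:improvement}, grow linearly in $N$.

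\textbf{Step 1 (logistic loss $\Rightarrow$ pointwise oscillation).} Solving $-\log\sigma(d_\phi(x_i))\le\epsilon$ gives $d_\phi(x_i)\ge a_\epsilon:=\log\frac{e^{-\epsilon}}{1-e^{-\epsilon}}$, and using $\sigma(-z)=1-\sigma(z)$ the fake-sample bound gives $d_\phi(x'_i)\le -a_\epsilon$; a Taylor expansion shows $a_\epsilon=|\log\epsilon|\,(1+o(1))$, hence $a_\epsilon^2\sim\log^2(\epsilon)$ as $\epsilon\to0$. Thus on any interval whose endpoints are a consecutive real--fake pair of pooled samples, $d_\phi$ must vary by at least $2a_\epsilon$. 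I also note that the target gradient appearing in $\mathcal{L}_{\mathrm{MSE}}^d$, namely $\partial_x\log(p/\hat p)$, is bounded on every compact subset of the overlap set (by $L$-Lipschitzness of $p,\hat p$ together with a positive lower bound on the densities there), so when the square in $\mathcal{L}_{\mathrm{MSE}}^d$ is expanded every contribution except the one quadratic in $d_\phi'$ --- the cross term with $\partial_x\log(p/\hat p)$ and any $N$-independent additive constant in the definition of $\mathcal{L}_{\mathrm{MSE}}^d$ --- is of lower order once we divide by $N$.

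\textbf{Step 2 (localization and spacings).} Let $\Omega$ be the set on which both $p$ and $\hat p$ are positive; its overlap mass is $\int\min(p,\hat p)=1-\TV(P,\widehat P)$, so $\Omega$ is non-null exactly when $\TV(P,\widehat P)<1$. On $\Omega$ the pooled sample of $2N$ points is densely interleaved, so on each real--fake (``sign-change'') gap $[a_j,b_j]$, Cauchy--Schwarz gives $\int_{a_j}^{b_j}(d_\phi')^2\ge (2a_\epsilon)^2/(b_j-a_j)$, and summing yields $\mathcal{L}_{\mathrm{MSE}}^d(\phi)\ge 4a_\epsilon^2\sum_j \bar p_j/(b_j-a_j) - (\text{lower order})$, with $\bar p_j$ a typical value of $p$ on the $j$-th gap. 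The remaining work is the asymptotics of this random sum: near $x$, the number of sign-change gaps and their lengths are governed to leading order by the local intensities $N p(x)$ and $N\hat p(x)$ of the two empirical point processes, so a law of large numbers (cleanest via Poissonization of the superposed process and the joint law of consecutive spacings) shows $\frac{1}{N}\,\mathbb{E}\big[4a_\epsilon^2\sum_j \bar p_j/(b_j-a_j)\big]$ converges to a limit expressed as an integral of $p,\hat p$ over $\Omega$, which vanishes when $\TV(P,\widehat P)=1$ and is positive otherwise; bounding that overlap integral against $\int\min(p,\hat p)$ produces the stated $(1-\TV(P,\widehat P))^4$ scaling, and the factor $a_\epsilon^2\sim\log^2(\epsilon)$ is carried through unchanged. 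For the matching upper bound --- so the limit is exactly $c$ and not merely a lower bound --- one exhibits the canonical piecewise-linear (then mollified) interpolant of the constraint values, which satisfies the hypothesis and turns the Cauchy--Schwarz step into an equality up to $o(1)$.

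\textbf{Main obstacle.} The delicate part is the spacings analysis: ``the samples interleave'' must be upgraded to an expectation-level limit for $\sum_j \bar p_j/(b_j-a_j)$, and since the individual gaps are of order $1/N$ one must control $\mathbb{E}[1/(b_j-a_j)]$ carefully --- the relevant expectation couples which gaps are sign-changing with how short they are, so it requires the joint distribution of consecutive spacings of two independent point processes together with a uniform-integrability argument, for which Poissonization is the natural device. Pinning down the exact power $(1-\TV(P,\widehat P))^4$, rather than merely ``some positive increasing function of the overlap mass'', is the second delicate point and should follow from an explicit estimate of the overlap integral in terms of $\int\min(p,\hat p)$. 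Everything else --- the pointwise bound of Step 1, the per-gap Cauchy--Schwarz, and discarding the bounded cross term and the $N$-independent constant after normalizing by $N$ --- is routine.
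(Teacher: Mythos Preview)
Your overall plan --- convert the logistic constraint into a pointwise oscillation bound, localize to the overlap region, and extract a gradient lower bound from forced sign changes --- is the right shape, and Step~1 matches the paper. But the core mechanism you propose in Step~2 is genuinely different from the paper's and, as written, does not go through.

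The paper does \emph{not} sum over all real--fake gaps. It works with a \emph{single} deterministically sized interval $B$ of width $\beta$ of order $1/N$: after a rejection-sampling reduction that turns both samples into i.i.d.\ draws from $\mu\propto\min(p,\hat p)$ (yielding about $M\approx N(1-\TV)$ points from each class), it uses Chebyshev together with a variance lemma $Var_\mu\le C(1-\TV)^{-2}\min(Var_P,Var_{\hat P})$ to locate $B$ inside a window $A$ with $\mu(A)\ge\tfrac12$, so that $B$ contains a point of each type with probability at least $\tfrac12$ once $\beta$ is tuned. The single-interval Jensen step then gives $\int_B(\nabla d_\phi)^2\,dP\ge(\tilde\mu(B)/\beta^2-L)\,4\alpha^2$, and substituting $\tilde\mu(B)\ge(1-\TV)\beta/(4b)$, $\beta\propto b/M$, $b\propto(1-\TV)^{-1}$, $M\propto N(1-\TV)$ is exactly where the factor $(1-\TV)^4$ appears --- not from a separate ``overlap-integral'' estimate as you suggest. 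Because $\beta$ is chosen deterministically, the quantity $1/\beta$ is not random and no integrability issue arises.

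Your spacings route runs into the obstacle you flag, but it is fatal rather than merely delicate: for i.i.d.\ points with a density bounded away from zero, each spacing $D$ has a density bounded below near the origin, so $\mathbb{E}[1/D]=\infty$; and conditioning on a sign change does not help, since in the overlap region the label of each pooled point is essentially an independent coin flip and hence (to leading order) independent of the adjacent gap length. Consequently $\mathbb{E}\big[\sum_j\bar p_j/(b_j-a_j)\big]=\infty$ for every $N$, and no uniform-integrability or Poissonization device can manufacture a finite limit for $\tfrac{1}{N}$ times this expectation. Relatedly, your proposed matching upper bound cannot hold: the piecewise-linear interpolant gives exactly $\int(d_\phi')^2\,p\,dx=\sum_j\bar p_j(2a_\epsilon)^2/(b_j-a_j)$, whose expectation is infinite by the same computation. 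The statement is in effect only a lower bound (and the paper's proof establishes only that direction), so the upper-bound half of your plan is both unnecessary and unattainable.
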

\begin{hproof}
    The proof of the theorem is provided in Appendix~\ref{app:sec:proofoverfitting}.
    For clarity, we focus on the one-dimensional case, which explicitly illustrates the link between the similarity of $P$ and $\widehat{P}$ and the discriminator's behavior. However, the argument extends to higher dimensions. Overfitting leads the discriminator to assign highly different values to real and generated samples. As $P$ and $\widehat{P}$ get closer, generated samples increasingly appear near real ones, forcing the discriminator to separate them sharply. This induces high-frequency oscillations in $d_\phi$, where small input changes cause large output variations, resulting in excessive gradients even where the true gradient should be smooth.
\end{hproof}
\subsubsection{Interpretation.}
Theorem~\ref{thm:overfitting} highlights the behavior when the discriminator is \emph{overly confident}, with logits approaching $\pm\infty$. Two distinct scenarios arise:
\begin{itemize}
    \item When $P$ and $\widehat{P}$ differ significantly ($\TV(P ,\widehat{P}) \approx 1$), the discriminator's confidence is well-founded, resulting in stable MSE (Figure~\ref{fig:thm2a}).
    \item However, if the distributions $P$ and $\widehat{P}$ overlap significantly ($\TV(P ,\widehat{P}) \ll 1$), forcing high discriminator confidence ($\epsilon\to 0$) constitutes overfitting. In this scenario, the constant $c$ grows unbounded, causing the gradient MSE to diverge even though the training CE is minimized (Figure~\ref{fig:thm2b}). The situation of overlapping $P$ and $\whP$ is the common framework for refining diffusion models, as the pre-trained model ideally generates a distribution $\whP$ that closely aligns with $P$.
\end{itemize}

\section{Improved Discriminator Guidance}\label{sec:improvement}
In this section, we introduce a discriminator loss designed to explicitly approximate the gradient of the density ratio. We analyze its theoretical properties and practical implications.

\subsection{Proposed loss function}
To overcome the limitations we exposed in Section~\ref{sec:optimality}, we propose to add a term to the loss that explicitly accounts for the gradient of the density ratio. Ideally, if we wanted the gradient of the discriminator to approximate the gradient of the true density ratio, we could optimize the following loss:
\begin{align}\label{eq:scorematchingphi}
    \calL_{\mathrm{SM}}^{d}(\phi) = \int_0^T \lambda(t) \E_{P_0, P_t}\bigg[  \Vert \nabla_{\vx_t} \log p_t(\vx_t)/\whp_t(\vx_t) - \nabla_{\vx_t} \vd_\phi(\vx_t, t) \Vert_2^2 \bigg] \dt.
\end{align}
But, minimizing this loss suffers from the same obstacle as the score matching loss defined in Equation~\eqref{eq:scorematching}: the gradient of the true log-likelihood $\nabla_{\vx_t} \log p_t(\vx_t)$ is unknown. Hopefully, we can use the same argument as \citet{vincent_connection_2011} and instead use the denoising score matching loss Equation~\ref{eq:msephi}. Proposition~\ref{thm:mseloss} shows that minimizing $\calL_{\mathrm{SM}}^{d}(\phi)$ is equivalent to minimize $\calL_{\mathrm{MSE}}^{d}(\phi)$.
\begin{align}\label{eq:msephi}
    \calL_{\mathrm{MSE}}^{d}(\phi) = \int_0^T\lambda(t) \E_{P_0, P_{t\vert\vx_0}}\bigg[ & \Vert \nabla_{\vx_t} \log p_t(\vx_t\vert\vx_0)  - \vs_\theta(\vx_t, t)- \nabla_{\vx_t} \vd_\phi(\vx_t, t)\Vert_2^2 \bigg] \dt.
\end{align}
\begin{proposition}\label{thm:mseloss}
    Assume that $P$ and $\whP$ satisfy the assumptions detailed in Appendix~\ref{app:sec:assumptionssong}.
    Then, the following holds:
    \begin{equation}
        \argmin_\phi \calL_{\mathrm{SM}}^{d}(\phi) = \argmin_\phi \calL_{\mathrm{MSE}}^{d}(\phi).
    \end{equation}
\end{proposition}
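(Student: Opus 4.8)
The plan is to mimic Vincent's classical argument \citep{vincent_connection_2011} connecting score matching and denoising score matching, now applied to the \emph{difference} $\vs_{\theta,\phi}^{\mathrm{diff}}(\vx_t,t) \coloneq \vs_\theta(\vx_t,t) + \nabla_{\vx_t}\vd_\phi(\vx_t,t)$, which plays the role of a candidate approximation to $\nabla_{\vx_t}\log p_t(\vx_t)$. The key observation is that $\calL_{\mathrm{SM}}^{d}(\phi)$ is, up to the substitution $\nabla_{\vx_t}\vd_\phi \mapsto \vs_{\theta,\phi}^{\mathrm{diff}} - \vs_\theta$ and using $\nabla_{\vx_t}\log p_t/\whp_t = \nabla_{\vx_t}\log p_t - \vs_\theta$, exactly the score matching loss $\calL_{\mathrm{SM}}^{\vs}$ with target $\nabla_{\vx_t}\log p_t$ but with the trial function $\vs_{\theta,\phi}^{\mathrm{diff}}$; likewise $\calL_{\mathrm{MSE}}^{d}(\phi)$ is the denoising score matching loss $\calL_{\mathrm{MSE}}^{\vs}$ with that same trial function. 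Since $\vs_\theta$ is fixed (only $\phi$ varies), minimizing over $\phi$ is the same as minimizing over the family of trial functions $\{\vs_\theta + \nabla\vd_\phi : \phi\}$, so it suffices to prove the two losses differ by a $\phi$-independent constant.

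Concretely, I would proceed as follows. First, fix $t$ and work with the inner expectation. Expand the square in $\calL_{\mathrm{SM}}^{d}(\phi)$: the cross term is $-2\,\E_{P_t}\langle \nabla_{\vx_t}\log p_t(\vx_t)/\whp_t(\vx_t),\ \nabla_{\vx_t}\vd_\phi(\vx_t,t)\rangle$ and the pure-$\phi$ term is $\E_{P_t}\|\nabla_{\vx_t}\vd_\phi(\vx_t,t)\|_2^2$. Second, rewrite $\nabla_{\vx_t}\log p_t(\vx_t)/\whp_t(\vx_t) = \nabla_{\vx_t}\log p_t(\vx_t) - \vs_\theta(\vx_t,t)$, and handle the $\nabla_{\vx_t}\log p_t$ piece with the standard denoising identity: for the conditional (Gaussian) transition $P_{t\mid\vx_0}$ one has the tower-property identity
\begin{align}
    \E_{P_t}\big[h(\vx_t)^\top \nabla_{\vx_t}\log p_t(\vx_t)\big] = \E_{P_0,P_{t\mid\vx_0}}\big[h(\vx_t)^\top \nabla_{\vx_t}\log p_t(\vx_t\mid\vx_0)\big]
\end{align}
for any sufficiently regular vector field $h$ — this is the crux of Vincent's lemma and follows from $p_t(\vx_t) = \int p_t(\vx_t\mid\vx_0)\,p_0(\vx_0)\,d\vx_0$, moving $\nabla_{\vx_t}$ inside the integral, and integration by parts / the definition of $\nabla\log$. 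Apply it with $h = \nabla_{\vx_t}\vd_\phi$. Third, symmetrically expand $\calL_{\mathrm{MSE}}^{d}(\phi)$, whose integrand is $\|\nabla_{\vx_t}\log p_t(\vx_t\mid\vx_0) - \vs_\theta - \nabla_{\vx_t}\vd_\phi\|_2^2$; its cross term in $\phi$ matches what we just obtained, and its pure-$\phi$ term is again $\E\|\nabla_{\vx_t}\vd_\phi\|_2^2$ (the conditioning does not touch $\vd_\phi$, which depends only on $\vx_t$). Fourth, subtract: all $\phi$-dependent terms of $\calL_{\mathrm{SM}}^{d}(\phi)$ and $\calL_{\mathrm{MSE}}^{d}(\phi)$ coincide after integrating $\lambda(t)\,dt$, so $\calL_{\mathrm{SM}}^{d}(\phi) = \calL_{\mathrm{MSE}}^{d}(\phi) + C$ with $C$ collecting the terms $\E_{P_t}\|\nabla_{\vx_t}\log p_t(\vx_t) - \vs_\theta\|_2^2$ versus $\E\|\nabla_{\vx_t}\log p_t(\vx_t\mid\vx_0) - \vs_\theta\|_2^2$, neither of which involves $\phi$. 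The conclusion $\argmin_\phi \calL_{\mathrm{SM}}^{d} = \argmin_\phi \calL_{\mathrm{MSE}}^{d}$ is then immediate.

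The main obstacle is purely technical rather than conceptual: justifying the exchange of $\nabla_{\vx_t}$ with the integral over $\vx_0$ and the subsequent integration by parts, i.e. verifying the integrability and decay conditions that make the tower-property identity valid for the vector field $h = \nabla_{\vx_t}\vd_\phi$. This is exactly where the assumptions of Appendix~\ref{app:sec:assumptionssong} (the same regularity and decay hypotheses used by \citet{song_maximum_2021} to derive Equations~\eqref{eq:klphat} and~\eqref{eq:klptilde}) are invoked — they guarantee finiteness of all the expectations above and legitimacy of the differentiation-under-the-integral and boundary-vanishing steps. A minor secondary point is to confirm that the constant $C$ is genuinely finite (not merely $\phi$-independent), so that "equal up to a constant" is meaningful; this again follows from the stated assumptions, since both $\E_{P_t}\|\nabla\log p_t - \vs_\theta\|^2$ and its denoising counterpart appear in the KL expressions \eqref{eq:klphat} already assumed finite.
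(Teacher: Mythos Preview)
Your proposal is correct and follows essentially the same route as the paper: expand both losses into a pure $\E\|\nabla_{\vx_t}\vd_\phi\|_2^2$ term, a cross term, and a $\phi$-independent remainder, then invoke Vincent's tower-property identity to show the cross terms coincide, yielding $\calL_{\mathrm{SM}}^{d}(\phi)=\calL_{\mathrm{MSE}}^{d}(\phi)+C$. Your framing via the trial function $\vs_\theta+\nabla_{\vx_t}\vd_\phi$ is a slightly cleaner reduction to the original Vincent lemma than the paper's direct expansion, but the computational content is identical.
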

\begin{hproof}
    The proof of Proposition~\ref{thm:mseloss} is given in Appendix~\ref{app:sec:proofmseloss}. It follows from the fact that the losses differ only by an additive constant independent of $\phi$.
\end{hproof}
Therefore, training a discriminator with $\calL_{\mathrm{MSE}}^{d}(\phi)$ will correctly make its gradient a reliable estimate of the log-likelihood ratio.

\subsection{Practical Considerations}
In practice, we combine our introduced loss term with the standard cross-entropy loss to facilitate learning, and optimize the loss equation~\ref{eq:finalloss}.
\begin{align}\label{eq:finalloss}
    \calL_{\mathrm{train}}^{d}(\phi) = \calL_{\mathrm{MSE}}^{d}(\phi) + \gamma\calL_{\mathrm{CE}}^{d}(\phi),
\end{align}
where $\gamma$ is a hyperparameter that controls the importance of the cross-entropy loss. We detail the algorithmic implementation of our method in Algorithm \ref{alg:discriminator_training}.

\begin{algorithm}[t!]
    \caption{Training Discriminator with \( \mathcal{L}_{\text{CE}}^d \) and \( \mathcal{L}_{\text{MSE}}^d \)}
    \label{alg:discriminator_training}
    \begin{enumerate}
        \item Input: Pre-trained model $s_\theta$, Set of real data $\mathcal{X}$, Set of generated samples $\hat{\mathcal{X}}$  weighting function $\lambda(t)$, Distribution of timesteps $\mathcal{T}$, batch size $b$
        \item Initialize discriminator parameters \( \phi \)
        \item Repeat until convergence:
              \begin{enumerate}
                  \item Sample a batch $\{\vx_i\}_{i=1}^b$ from the training data $\mathcal{X}$ and $\{t\}_{i=1}^b \sim \mathcal{T}$
                  \item Perturb the samples $\{\vx_i\}_{i=1}^b$ to timesteps $t_i$ to $\{\vx_i^t\}_{i=1}^b$
                  \item \textbf{Cross-Entropy Optimization (Baseline Loss):}
                        \begin{itemize}
                            \item Sample a batch $\{\hat{\vx}_i\}_{i=1}^{b}$ from $\hat{\mathcal{X}}$ and perturb to timesteps $t_i$: $\{\hat{\vx}_i^{t}\}_{i=1}^{b}$
                            \item Compute the CE loss:
                                  $$\widehat{\mathcal{L}_{\text{CE}}^d(\phi)} = \frac{1}{b}\sum_{i=1}^{n}\lambda(t_i)\left[-\log\left(\sigma\left(d_{\phi}(\vx_i^{t}, t)    \right)    \right) - \log\left(1-\sigma\left(d_{\phi}(\hat{\vx}_i^{t}, t)    \right)    \right) \right].$$
                        \end{itemize}
                  \item \textbf{Gradient Matching Optimization (Proposed Loss):}
                        \begin{itemize}

                            \item Compute target and pre-trained model's scores on the perturbed training batch $\{\vx_i^t\}_{i=1}^b$: \( \nabla_{\vx_i^t} \log p_t(\vx_i^{t}\vert\vx_i) \) and $s_{\theta}(x_i^{t},t)$
                            \item Compute discriminator gradient: \( \nabla_{\vx_i^t} d_\phi(\vx_i^{t}, t) \) (e.g autodifferentiation)
                            \item Compute gradient-matching loss:
                                  \[
                                      \widehat{\mathcal{L}_{\text{MSE}}^d(\phi)} =  \frac{1}{b} \sum_{i=1}^b \lambda(t_i)\| \nabla_{\vx_i^t} \log p_t(\vx_i^t \vert \vx_i) - s_{\theta}(\vx_i^t,t) - \nabla_{\vx_i^t} d_\phi(\vx_i^t, t) \|^2
                                  \]

                        \end{itemize}
                  \item Update \( \phi \) via gradient descent on \(\widehat{\mathcal{L}_{\text{train}}^d(\phi)} = \widehat{\mathcal{L}_{\text{CE}}^d(\phi)} + \gamma \widehat{\mathcal{L}_{\text{MSE}}^d(\phi)} \)
              \end{enumerate}
    \end{enumerate}
\end{algorithm}

\subsubsection{Optimizing \( \mathcal{L}_{\text{CE}}^d(\phi)\) and \( \gamma \mathcal{L}_{\text{MSE}}^d(\phi) \)}
Optimizing both terms yields to different considerations:
\begin{itemize}
    \item \textbf{Optimizing \( \mathcal{L}_{\text{CE}}^d(\phi)\).} In \citet{kim_refining_2023}, the discriminator is trained on real and generated samples, drawn from the target distribution \(P\) and the refined distribution \(\widehat{P}\). Since generated samples are typically precomputed and stored, training with \( \mathcal{L}_{\text{CE}}^d(\phi)\) exposes the discriminator to a fixed dataset, increasing the risk of overfitting. However, cross-entropy loss is easier to optimize than MSE loss, as it does not require computing the target distribution gradient. This results in faster training and lower memory usage.
    \item \textbf{Optimizing \(\mathcal{L}_{\text{MSE}}^d(\phi)\).} Our proposed loss introduces dynamic perturbations to training samples, exposing the model to greater variability. While it correctly estimates the target gradient, this comes with computational overhead: each sample requires computing both \(\nabla_{\vx_t} \log p_t(\vx_t\vert\vx_0)\) and \(\vs_\theta(\vx_t, t)\). Additionally, backpropagation is more complex, as gradients must be propagated through \(\nabla_{\vx_t} d_\phi(\vx_t, t)\) rather than directly through \(d_\phi\). These factors lead to increased computational time and memory requirements.
\end{itemize}

\section{Experiments}\label{sec:experiments}

In this section we compare our proposed method to the baseline from \citet{kim_refining_2023}. We demonstrate the benefits of our approach on both synthetic and real-world datasets. We will (1) observe the effect of overfitting on the quality of the refinement, (2) show the behavior of the training loss on the discriminator guidance, and (3) compare effectiveness of the proposed method for different training/generation settings and finally (4) compare the quality of the samples generated by the EDM, the EMD+DG and our method.

\begin{figure*}[b!]
    \centering
    \includegraphics[width=\textwidth]{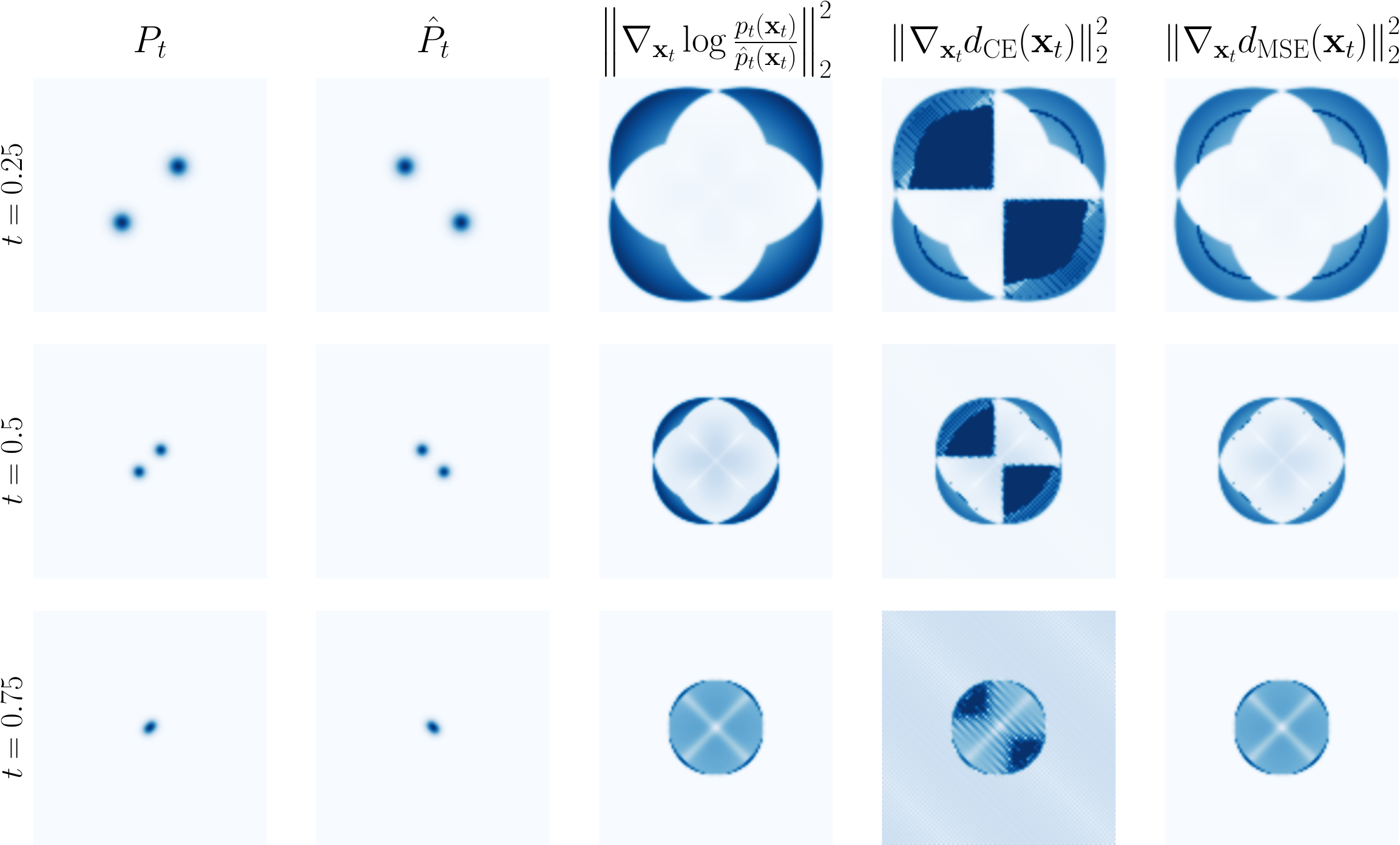}
    \caption{Visualizing the estimation of $\nabla_{\vx_{t}} \log p_{t}(\vx_{t})/ \log \hat{p}_{t}(\vx_{t})$ for the discriminator trained with low CE or low MSE loss. We plot the norms for better readability.}
    \label{fig:2d_gradient}
\end{figure*}
\subsection{Visualizing the Discriminator Guidance in low-dimension}
We consider two distinct Gaussian mixtures in $\mathbb{R}^{2}$, representing $P$ and $\widehat{P}$. Using a subVP-SDE \citep{song_score-based_2021}, we derive the closed-form expression of the score ratio $\nabla_{\vx} \log p(\vx)/ \widehat{p}(\vx)$ and employ it to refine samples from $\widehat{P}$ towards $P$. We evaluate the performance of a discriminator trained with cross-entropy loss $\mathcal{L}^d_{\mathrm{CE}}$ and mean squared error loss $\mathcal{L}^d_{\mathrm{MSE}}$.

\subsubsection{Training with $\mathcal{L}^d_{\mathrm{MSE}}$ gives a better gradient approximation.} We plot the resulting gradient norms in Figure~\ref{fig:2d_gradient} (full vector fields are depicted in Figure~\ref{fig:2D-quivers} in Appendix~\ref{app:sec:additionalplots}). On this synthetic examples, we see that the discriminator trained with $\mathcal{L}^d_{\mathrm{MSE}}$ achieves a much more precise gradient estimation, resulting in improved samples refinement than the discriminator trained with $\mathcal{L}^d_{\mathrm{CE}}$. This is confirmed in Figure~\ref{fig:2Dplots}, where we compare the estimated density of the refined distribution for both methods, demonstrating that the MSE loss yields superior refinement quality.


\subsection{Testing our approach the methods on real-world dataset}

For high-dimensional image datasets, we implement our approach using the unconditional pre-trained EDM model \citep{karras_elucidating_2022} on CIFAR-10, FFHQ in resolution $64\times 64$, and AFHQv2 in resolution $64\times64$. We apply the generation algorithm introduced by \citet{kim_refining_2023} to the EDM framework. To generate samples from the pre-trained model with a discriminator, they introduce a weight $w$ to balance the contributions of the pre-trained model and the discriminator:
\begin{equation}
    \widetilde{\vs}_\theta(\vx) = \vs_{\theta}(\vx) + w \nabla_{\vx} d_{\phi}(\vx).
\end{equation}
As a baseline, we use a discriminator trained with cross-entropy loss $\mathcal{L}^d_{\mathrm{CE}}$, denoted as (EDM+DG), and compare it with our method, which employs $\mathcal{L}^d_{\mathrm{train}}$ introduced in Equation~\eqref{eq:finalloss}. The generation processes are evaluated using the Fréchet Inception Distance (FID) \citep{heusel_gans_2017}, as well as Precision and Recall with $k=3$ \citep{kynkaanniemi_improved_2019}. For FID computation, we use 50k samples for the CIFAR-10 and FFHQ datasets and 15k samples for the AFHQv2 dataset. For Precision and Recall, we set $k=3$ and use 10k samples for each dataset.

Finally, the discriminators are parametrized following \citet{kim_refining_2023}: we adopt the ADM architecture, freezing the upper layers and training only the final layers. This results in training only 2.88M parameters, compared to the total of 50.6M, 68.3M, and 68.3M parameters for CIFAR-10, FFHQ, and AFHQv2, respectively. For comparison, the pre-trained EDM model consists of 55.7M, 61.8M, and 61.8M parameters for these datasets. Generation is conducted on 2×GPU-H100, while evaluation is performed on 2×GPU-V100. A complete list of hyperparameters is available at \url{https://github.com/AlexVerine/BoostDM}.

\begin{figure*}[b!]
    \centering
    \includegraphics[width=\textwidth]{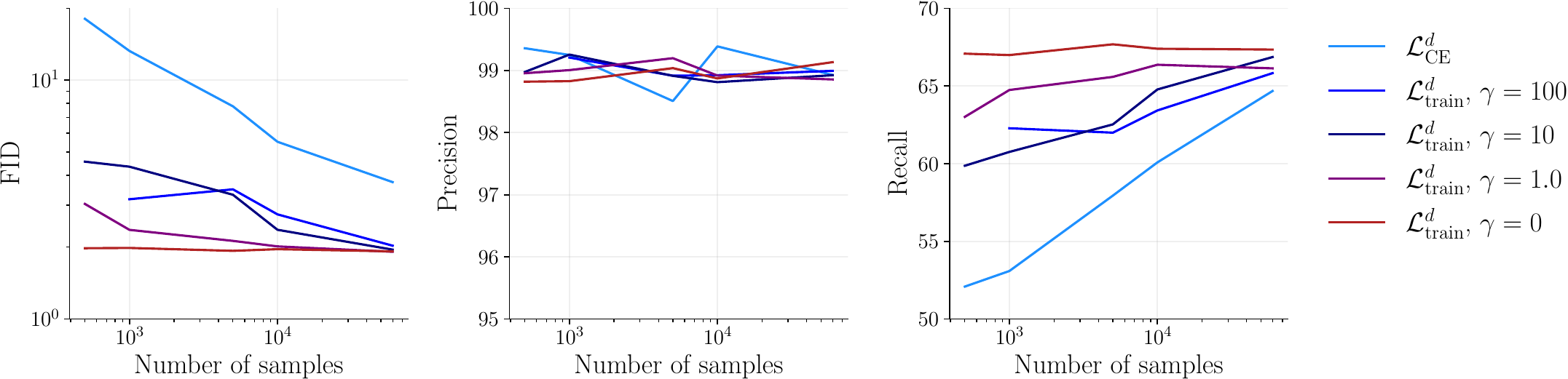}
    \caption{Discriminator guidance using a different number of samples for the training set. Generation is performed on CIFAR-10 with $w=1$ for all methods. FID ($\downarrow$), Precision ($\uparrow$), and Recall ($\uparrow$) are reported.}
    \label{fig:size}
\end{figure*}

\subsubsection{Observing Overfitting in the Discriminator.}
We analyze how the training loss of the discriminator evolves on CIFAR-10 by varying the number of training samples from 500 to 50,000. Figure~\ref{fig:size} presents the key metrics. While Precision remains relatively stable regardless of dataset size, both Recall and the FID score change notably: Recall decreases as the number of training samples increases, while the FID score rises. This indicates that the discriminator overfits the training set, ultimately degrading the refinement quality.
Additionally, our proposed method shows greater robustness to dataset size. However, when the balance between MSE and CE losses increases (i.e., for higher $\gamma$ values), sensitivity to the number of training samples also increases.
\subsubsection{Observing the effect of the regularization parameter $\gamma$.} We compare how the estimated score behaves with the two different training losses. To assess how the discriminator captures the gradient of the log density ratio, we plot in Figure~\ref{fig:losses} the evolution during of $\mathcal{L}^{d}_{\mathrm{MSE}}$ introduced in Equation~\eqref{eq:msephi}. We observe that the lower $\gamma$ is, the better the discriminator is at estimating the gradient of the log density ratio. This is consistent with the results of the previous section.
\begin{figure*}[t!]
    \centering
    \includegraphics[width=\textwidth]{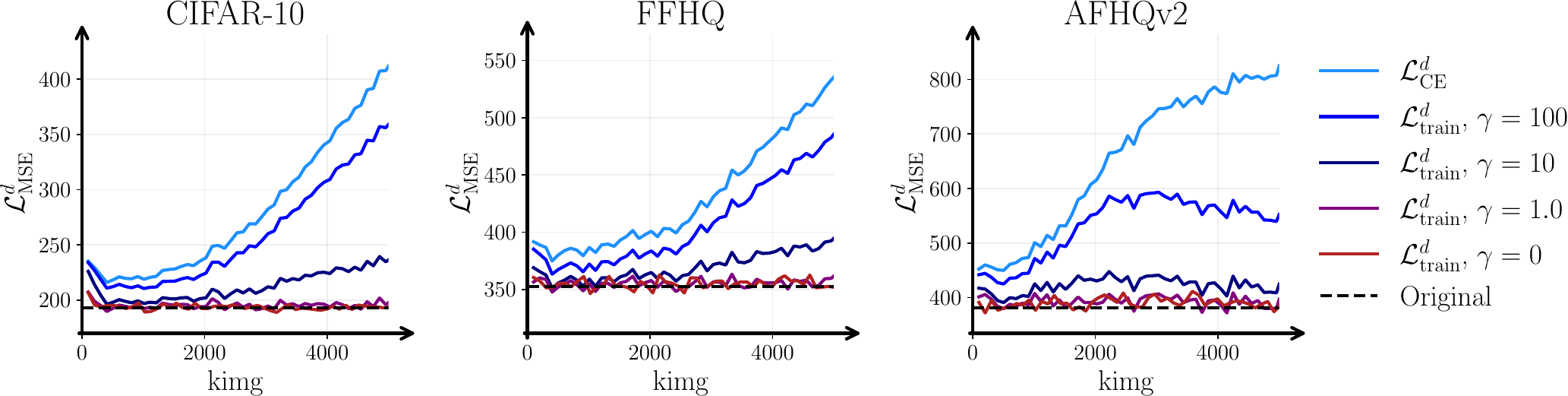}
    \caption{Comparison of the proposed loss $\mathcal{L}^{d}_{\mathrm{train}}$ and the standard loss $\mathcal{L}^{d}_{\mathrm{CE}}$ for the discriminator guidance. We plot the evolution of the loss during training.}
    \label{fig:losses}
\end{figure*}

\subsubsection{Comparing the effect of the weight $w$.} Depending on the training loss, the gradient of the discriminator can have different ranges and for the refinement to be effective, the weight $w$ must be adjusted. We evaluate the effect of $w$ on the FID score for the EDM+DG and EDM+Ours methods on CIFAR-10, FFHQ, and AFHQv2. The results are shown in Figure~\ref{fig:fid_weights}. We observe that the optimal $w$ is different for each dataset but that the proposed method typically leads to lower effect of the discriminator and therefore a larger $w$ is needed. For each method and dataset, we report the FID, Precision, and Recall scores for the parameters $w$ and $\gamma$ that yield the best FID score in Table~\ref{tab:results}.

\begin{figure*}[b!]
    \includegraphics[width=\textwidth]{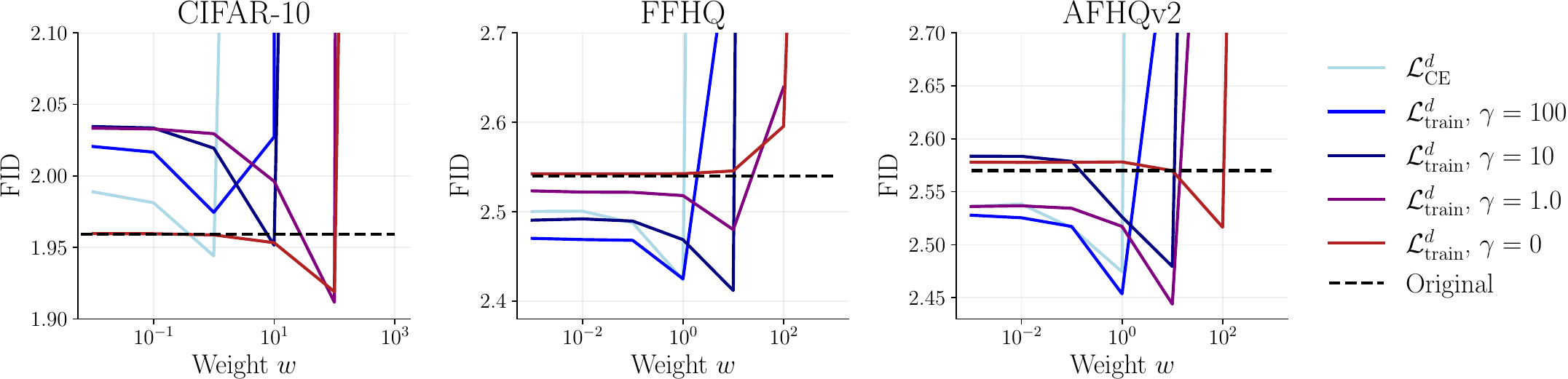}
    \caption{FID score for different values of $w$ on CIFAR-10, FFHQ, and AFHQv2.}
    \label{fig:fid_weights}
\end{figure*}

\begin{table*}[t!]
    \centering
    \caption{Comparison of the proposed method with optimal $w$ on CIFAR-10, FFHQ and AFHQv2. We report the FID ($\downarrow$), Precision ($\uparrow$), and Recall ($\uparrow$) scores and the number of GPUs used for training and generation, the time for training step in seconds per 1000 images, and the memory consumption in Gi.}
    \label{tab:results}
    \begin{tabular*}{\textwidth}{l@{\extracolsep{\fill}}lccc|ccc}
        \toprule
        Dataset &  Method & FID & P & R & GPUs & Time & Mem. (Gi) \\
        \midrule
        \multirow{3}{*}{CIFAR-10} &   EDM & $1.96$ & $99.10$ & $67.48$ &-&-&- \\
        & EDM+DG & $1.94$ & $98.91$ & $67.44$&4xV100&1.18&2.24 \\
        & EDM+Ours & $1.91$ & $98.86$ & $66.14$&4xV100&$4.00$&$9.18$\\
        \midrule
        \multirow{3}{*}{FFHQ} & EDM & $2.54$ & $99.69$ & $69.69$ &-&-&-\\
        & EDM+DG & $2.42$ & $99.60$ & $69.16$&4xH100&1.05&7.03 \\
        & EDM+Ours & $2.41$ & $99.56$ & $69.09$&4xH100&4.74&20.26 \\
        \midrule
        \multirow{3}{*}{AFHQv2} &   EDM & $2.57$ & $99.99$ & $75.64$ &-&-&-\\
        & EDM+DG & $2.47$ & $99.83$ & $74.41$&4xH100&1.05&7.03 \\
        & EDM+Ours & $2.44$ & $99.96$ & $74.58$&4xH100&4.74&20.26 \\
        \bottomrule
    \end{tabular*}
\end{table*}

\begin{figure*}[b!]
    \subfloat[EDM]{\includegraphics[width=0.32\textwidth]{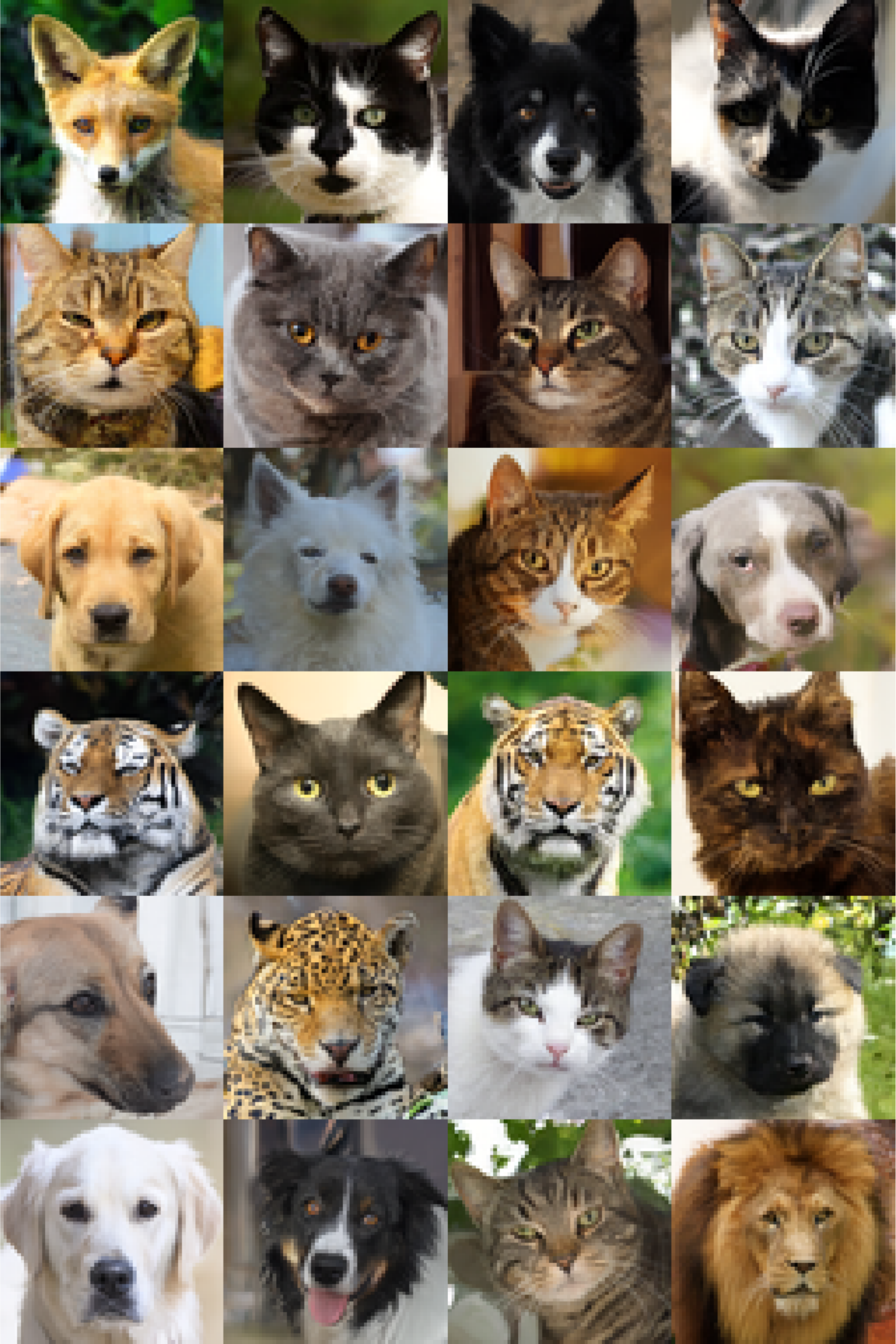}}\hfill
    \subfloat[EDM+DG]{\includegraphics[width=0.32\textwidth]{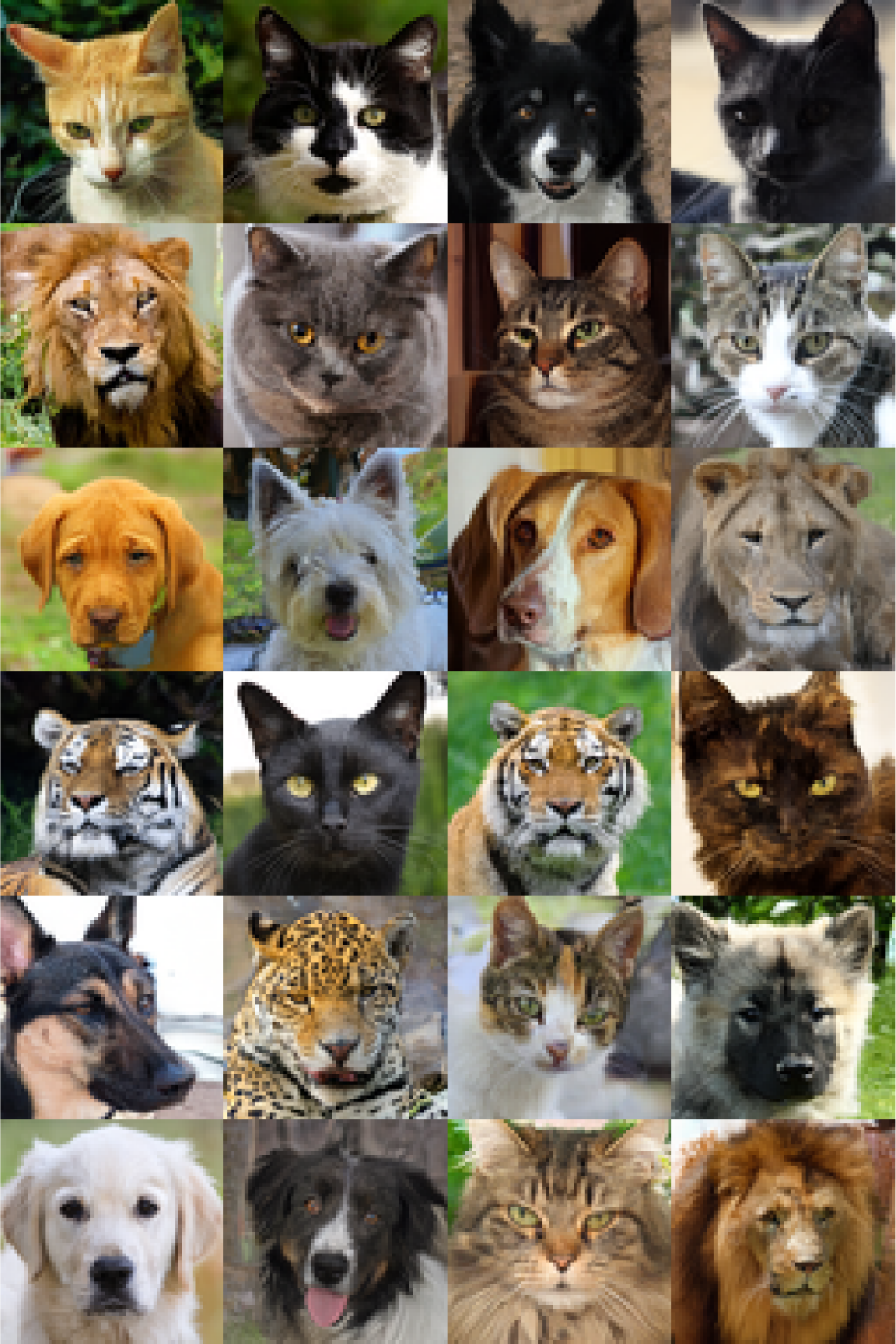}}\hfill
    \subfloat[EDM+Ours]{\includegraphics[width=0.32\textwidth]{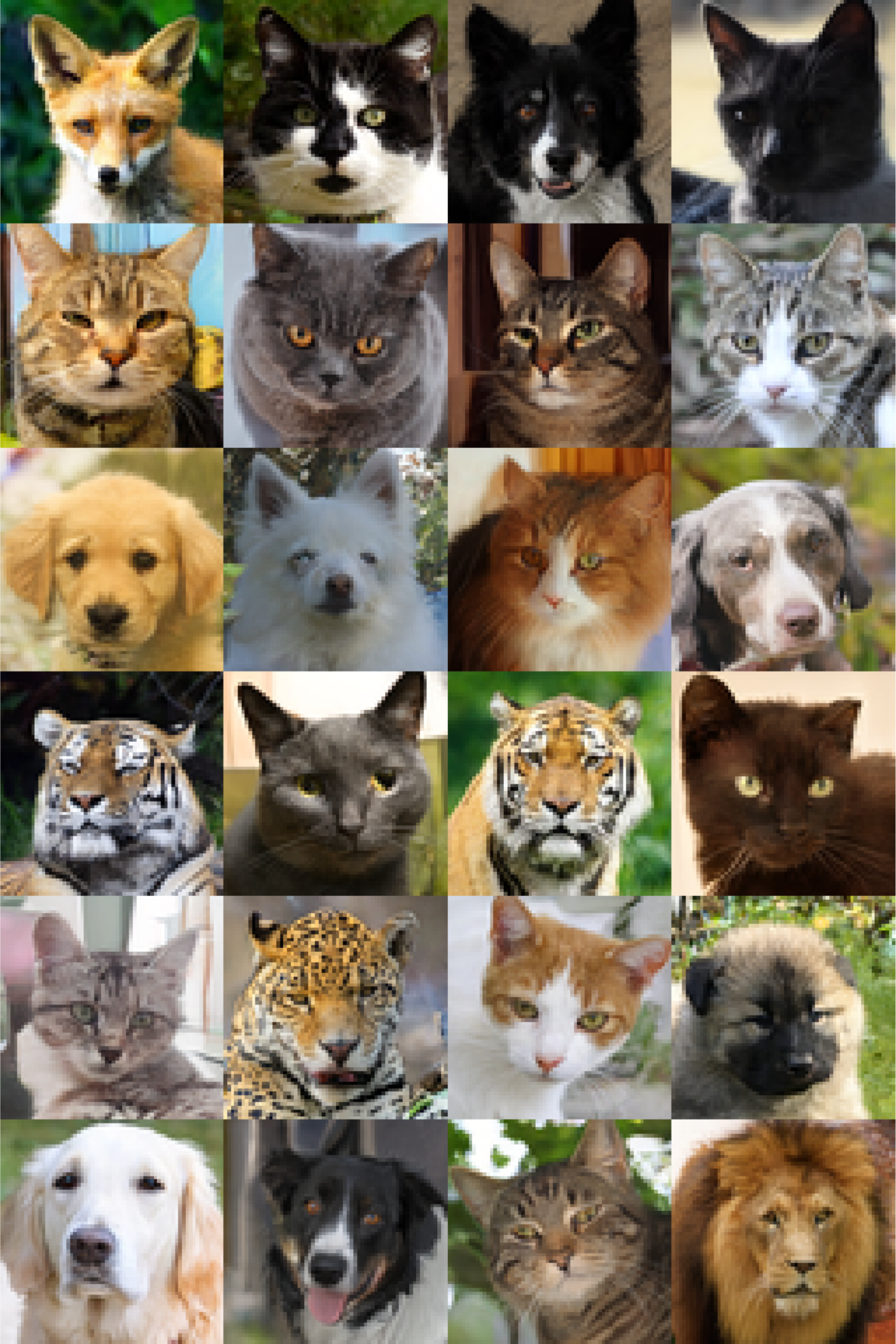}}
    \caption{Samples generated on the AFHQv2 dataset.}
    \label{fig:afhq}
\end{figure*}

\subsubsection{Comparing EDM, EDM+DG and our method.} In Table~\ref{tab:results}, we compare the FID, Precision, and Recall scores for the EDM, EDM+DG, and EDM+Ours methods on CIFAR-10, FFHQ, and AFHQv2. We observe that our method consistently outperforms the EDM+DG method in terms of FID score. The Precision is not significantly affected by the method used, while the Recall can be slightly lower for our method. We can observe on Figure~\ref{fig:afhq} that the samples generated by our method are closer to the samples generated by the EDM model than the EDM+DG method. On this uncurated set of samples from the AFHQv2 dataset, we can see that the samples generated by the EDM+DG method often changes the class of the sample while our method does not. We plot more samples in Appendix~\ref{app:sec:additionalplots}.

\section{Conclusion and Discussion}\label{sec:conclusion}

In this paper, we have demonstrated that discriminator guidance can be significantly improved by refining the loss function used for training the discriminator. Our theoretical analysis reveals that minimizing cross-entropy can lead to unreliable refinement, particularly in the overfitting regime, where the discriminator learns to separate real and generated samples too aggressively. To mitigate this issue, we introduced an alternative training approach that minimizes the reconstruction error, resulting in more accurate gradient estimation and improved sample quality in EDM diffusion models across diverse datasets.

Our findings highlight a fundamental challenge: while a discriminator can estimate the density ratio, accurately capturing its gradient remains difficult in practice. This issue is exacerbated by overfitting, where the discriminator's learned function develops high-frequency oscillations that distort the refinement process. Although discriminator guidance is theoretically optimal with a perfect discriminator, real-world constraints—such as finite data, model capacity, and training instability—limit its practical effectiveness.


\begin{credits}
    \subsubsection{\ackname}
    This work was granted access to the HPC resources of IDRIS under the allocations 2025-A0181016159, 2025-AD011014053R2 made by GENCI.

\end{credits}
%

%
%
%
\bibliographystyle{splncs04}
\bibliography{references,references_2}

\appendix
\appendix
\section{Mathematical Supplementary}
\subsection{Assumptions}\label{app:sec:assumptionssong}
In the paper and in the following proofs, we make the following assumptions :
\begin{enumerate}
    \item $p(\vx) \in \cal C^2(\reals^d)$ and $\E_P\left[\Vert \vx \Vert_2^2\right] < \infty$.
    \item $q(\vx) \in \cal C^2(\reals^d)$ and $\E_Q\left[\Vert \vx \Vert_2^2\right] < \infty$.
    \item $\forall t \in [0, T]$, $\vf(\vx, t) \in \cal C^1(\Xset)$ and $\exists C>0$ such that $\forall \vx \in \reals^d,t\in[0, T],\Vert \vf(\vx, t)\Vert_2 \leq C( 1+\Vert\vx\Vert_2)$.
    \item $\exists C >0 $ such that $\forall \vx, \vy \in \reals^d, \ \Vert \vf(\vx, t) - \vf(\vy, t)\Vert_2 \leq C\Vert \vx - \vy\Vert_2$.
    \item $g\in \cal C^1([0, T])$ and $\forall t \in [0, T],\ \vert g(t)\vert > 0$.
    \item For any open bounded set $\cal O \subset \reals^d$, $\int_0^T \int_{\cal O} \Vert p_t(\vx_t) \Vert_2^2 + d\Vert \nabla_{\vx_t}p_t(\vx_t) \Vert_2^2\d \vx_t \d t < \infty$.
    \item $\exists C >0$ such that $\forall \vx\in \reals^d, t\in[0,T]:\ \Vert\nabla_{\vx_t} \log p_t(\vx_t)\Vert_2 \leq C(1+\Vert \vx_t\Vert_2)$.
    \item $\exists C >0$ such that $\forall \vx,\vy\in \reals^d,t\in[0,T]:\ \Vert\nabla_{\vx_t} \log p_t(\vx_t) - \nabla_{\vx_t} \log p_t(\vy_t)\Vert_2 \leq C\Vert \vx_t - \vy_t\Vert_2$.
    \item $\exists C>0$ such that $\forall \vx\in \reals^d, t\in[0,T]:\ \Vert\nabla_{\vx_t} \vs_\theta(\vx_t, t)\Vert_2 \leq C(1+\Vert \vx_t\Vert_2)$.
    \item $\exists C>0$ such that $\forall \vx,\vy\in \reals^d,t\in[0,T]:\ \Vert\nabla_{\vx_t} \vs_\theta(\vx_t, t) - \nabla_{\vx_t} \vs_\theta(\vy_t, t)\Vert_2 \leq C\Vert \vx_t - \vy_t\Vert_2$.
    \item Novikov's condition: $\E_P\left[\exp\left(\frac{1}{2}\int_0^T\Vert \nabla_{\vx_t} \log p_t(\vx_t)-\vs_\theta(\vx_t, t)\Vert_2^2\d t\right)\right] < \infty$.
    \item  $\forall t\in[0, T], \exists k>0: \ p_t(\vx)=O(e^{-\Vert \vx \Vert_2^k})$ as $\Vert \vx \Vert_2 \to \infty$.
\end{enumerate}

\subsection{Proof of Theorem~\ref{thm:suboptimal}}\label{app:sec:proofsuboptimal}

\begin{theorem}
    Let $\left\{\vx(t)\right\}_{t\in[0, T]}$ be a diffusion process defined by Equation~\eqref{diffproc}. Assume that $P$ and $\whP$ satisfy the assumptions detailed in Appendix~\ref{app:sec:assumptionssong}. Then, for every $\varepsilon>0$ and for every $\delta>0$, there exists a discriminator $\vd:\reals^d\times\reals\to\reals$ trained to minimize the cross-entropy such that:
    \begin{equation}
        \calL_{\mathrm{CE}}^{d}(\phi)\leq \varepsilon \et \KL(P\Vert \widetilde{P}) \geq \delta,
    \end{equation}
    where $\widetilde{P}$ is the distribution induced by discriminator guidance with $\vd$.
\end{theorem}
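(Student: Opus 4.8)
The plan is to prove the statement by exhibiting an explicit one-dimensional instance: a diffusion process, distributions $P$ and $\whP$, and a discriminator $\vd$ for which $\calL_{\mathrm{CE}}^{d}(\phi)\le\varepsilon$ while $\KL(P\Vert\wtP)\ge\delta$. Two elementary facts drive the argument. First, by Equation~\eqref{eq:klptilde}, $\KL(P\Vert\wtP)=\KL(P_T\Vert Q)+\tfrac12\int_0^T g(t)^2\,\E_{P_t}\big[\Vert\nabla_{\vx_t}\log\big(p_t(\vx_t)/\whp_t(\vx_t)\big)-\nabla_{\vx_t}\vd(\vx_t,t)\Vert_2^2\big]\dt$, so it suffices to make the \emph{gradient} of $\vd$ deviate, in weighted $L^2(P_t)$ norm, from the gradient of the true log-ratio. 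Second, since $z\mapsto-\log\sigma(z)$ and $z\mapsto-\log(1-\sigma(z))$ are $1$-Lipschitz, $\calL_{\mathrm{CE}}^{d}$ for any discriminator differs from its value at the exact minimizer by at most a quantity linear in the $L^1(\lambda)$ distance to that minimizer. These facts decouple control of the \emph{value} of $\vd$ (which governs the CE) from control of its \emph{gradient} (which governs the KL).

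Concretely, I take $\vd(\vx,t)=d^{*}(\vx,t)+a(t)\sin(\omega\vx)$, where $d^{*}(\vx,t)=\log\big(p_t(\vx)/\whp_t(\vx)\big)$ is the exact CE minimizer (so that $\vs_\theta+\nabla d^{*}=\nabla\log p_t$), $a:[0,T]\to[0,\infty)$ is a fixed nonzero amplitude profile, and $\omega>0$ is a frequency to be chosen. Then $\nabla_{\vx}\vd-\nabla_{\vx}\log(p_t/\whp_t)=a(t)\,\omega\cos(\omega\vx)$, so the second term of $\KL(P\Vert\wtP)$ equals $\tfrac12\int_0^T g(t)^2 a(t)^2\omega^2\,\E_{P_t}[\cos^{2}(\omega\vx_t)]\dt$. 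Writing $\cos^{2}=\tfrac12+\tfrac12\cos(2\,\cdot\,)$ and using that each $P_t$ (for $t<T$) has a density, Riemann--Lebesgue gives $\E_{P_t}[\cos^{2}(\omega\vx_t)]\to\tfrac12$, so this term is asymptotically $\tfrac{\omega^{2}}{4}\int_0^T g^2 a^2\dt$, which diverges; choosing $\omega$ large makes it exceed $\delta$. In the other direction, $\calL_{\mathrm{CE}}^{d}(\phi)\le\calL_{\mathrm{CE}}^{d^{*}}+2\int_0^T\lambda(t)|a(t)|\dt$: choosing the instance so that $\calL_{\mathrm{CE}}^{d^{*}}\le\varepsilon/2$ — recall the optimal CE is $\int_0^T\lambda(t)\big(2\log 2-2\,\mathrm{JSD}(P_t\Vert\whP_t)\big)\dt$, hence small as soon as $P_t$ and $\whP_t$ are nearly disjoint at every noise level $\lambda$ charges — and then rescaling $a$ to have small $L^1(\lambda)$ norm yields $\calL_{\mathrm{CE}}^{d}(\phi)\le\varepsilon$; the rescaling of $a$ costs nothing on the KL side since $\int g^2 a^2>0$ for any nonzero $a$ and we compensate with a larger $\omega$. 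Finally one verifies that $\wtP$ still satisfies the assumptions of Appendix~\ref{app:sec:assumptionssong}: $a(t)\sin(\omega\vx)$ and all its $\vx$-derivatives are uniformly bounded, so $\widetilde{\vs}_{\theta,\phi}$ retains linear growth and Lipschitz gradients, and Novikov's condition holds since $\tfrac12\int_0^T\Vert a(t)\omega\cos(\omega\vx_t)\Vert_2^2\dt\le\tfrac{\omega^{2}}{2}\int_0^T a(t)^2\dt<\infty$.

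The delicate part — and the main obstacle — is the requirement $\calL_{\mathrm{CE}}^{d^{*}}\le\varepsilon/2$: it forces $P_t$ and $\whP_t$ to be almost mutually singular for $\lambda$-almost every $t$, yet Gaussian-smoothed marginals never separate completely and merge toward $Q$ as $t\to T$. Making this rigorous demands a careful choice of instance — for instance a variance-exploding schedule with $P$ and $\whP$ supported far apart relative to the maximal noise scale, or a weighting $\lambda$ that de-emphasizes the terminal regime — while simultaneously checking every item of Appendix~\ref{app:sec:assumptionssong} and keeping $\KL(P_T\Vert Q)$ under control. Everything downstream of fixing the instance (the Lipschitz bound on the CE, the Riemann--Lebesgue limit, the verification of the assumptions for $\wtP$) is routine. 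A clean way to package the result is to measure the cross-entropy relative to its own global minimum: the constructed $\vd$ stays within $\varepsilon$ of the optimal CE while $\KL(P\Vert\wtP)\ge\delta$, which isolates exactly the value-versus-gradient misalignment the theorem asserts.
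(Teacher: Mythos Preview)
Your argument is correct and uses the same mechanism as the paper: perturb the optimal discriminator by a bounded high-frequency term so that its \emph{values} (hence the CE) barely move while its \emph{gradient} picks up an $\omega\cos(\omega\vx)$ contribution that blows up the score-matching error in Equation~\eqref{eq:klptilde}. The implementations differ only in packaging. The paper perturbs the density ratio itself, setting $r_\phi=r^{*}+\sqrt{\varepsilon\,r^{*}}\sin(\omega\vx)$, and controls the CE deviation via the Bregman identity $D_g(P\Vert\whP)-D_g^\phi(P\Vert\whP)=\E_{\whP}\big[\mathrm{Breg}_g(r_\phi,r^{*})\big]\le\E_{\whP}\big[h^2/r^{*}\big]\le\varepsilon$; you perturb the log-ratio directly and invoke the $1$-Lipschitz property of $z\mapsto-\log\sigma(\pm z)$, which is arguably tidier and avoids the Bregman machinery. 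On the absolute bound $\calL_{\mathrm{CE}}^{d}(\phi)\le\varepsilon$: you are right to flag this as the sticking point, and in fact the paper's proof does not establish it either --- it shows only that the constructed discriminator is $\varepsilon$-close to the \emph{optimal} cross-entropy, which is exactly the relative formulation you propose in your final paragraph. So your ``clean way to package the result'' is precisely what the paper actually delivers, and the instance-selection gymnastics you outline to force the absolute bound are unnecessary once one adopts that reading.
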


\subsubsection{Finding a problematic case}
The aim of discriminator guidance is to minimize the Kullback-Leibler divergence between the target distribution $P$, and the refined distribution $\tilde{P}$.
Following the assumptions detailed in Appendix \ref{app:sec:assumptionssong}, this quantity can be written as :
\begin{align}
    \KL(P\Vert \wtP) & = \KL(P_T\Vert Q) + \\&\int_0^Tg(t)^2\E_{P_t}\left[\Vert \nabla_{\vx_t} \log p_t(\vx_t) - \vs_\theta(\vx_t, t) - \nabla_{\vx_t}\vd_\phi(\vx_t, t)\Vert_2^2 \right]\dt.
\end{align}
Thus, discriminator guidance minimizes the latter term of the equality, that we denote as :

\begin{align}\label{main-goal}
    E_{\phi} =\int_0^Tg(t)^2\E_{P_t}\left[\Vert \nabla_{\vx_t} \log p_t(\vx_t) - \vs_\theta(\vx_t, t) - \nabla_{\vx_t}\vd_\phi(\vx_t, t)\Vert_2^2 \right]\dt.
\end{align}
where $d_{\phi}(\vx_{t})$ is the estimated density ratio by training the discriminator $d_{\phi}$.

We consider the case of estimating $d_{\phi}$ by minimizing the cross-entropy up to $\varepsilon$. We would like to find a pathological case where this would not minimize the mean square error in Equation \ref{main-goal}.
The estimation error of the discriminator can be derived from the duality difference in estimating the g-Bregman divergence, with :
\begin{align}
    g(\vx) = u\log(u) + (u+1)\log(u+1)
\end{align}
We denote by $\mathcal{D}_{g}(P \Vert P') = \inf_{f \in \mathcal{M}} \mathbb{E}_{(\vx) \sim p_{t}}\left[\log(\sigma(f(\vx_{t})))\right] - \mathbb{E}_{\vx \sim \hat{p}_{t}}\left[-\log(1 - \sigma(f(\vx_{t})))\right]$, where $\mathcal{M}$ denotes the set of all measurable functions. This quantity estimates the optimal cross-entropy that any measurable density ratio estimator $f$ can achieve.
The estimated $d_{\phi}$ has minimized the following quantity
\begin{align*}
    \mathcal{D}_{g}^{\phi} = \inf_{\omega \in R^{N}}\mathbb{E}_{\vx \sim p_{t}}\left[\log(\sigma(T_{\omega}(\vx_{t})))\right] - \mathbb{E}_{\vx \sim \hat{p}_{t}}\left[-\log(1 - \sigma(T_{\omega}(\vx_{t})))\right]
\end{align*}
We use the results of \citep{uehara_generative_2016,sugiyama_density_2010,verine_precision-recall_2023} to compute the estimation error of the cross-entropy:

\begin{theorem}
    For any discriminator $T_{\phi} : \mathcal{X} \xrightarrow{} R$ and density ratio estimation $d_{\phi} = \nabla_{\vx_t} g^{*}(T_{\phi}(\vx))$, we have :
    \begin{align}\label{thm1}
        D_{g}(P||\hat{P}) - D_{g}^{\phi}(P||\hat{P}) = \mathbb{E}_{\hat{P}}
        \left[
            \text{Breg}_{g}\left(d_{\phi}(\vx),\frac{p(\vx)}{\hat{p}(\vx)}\right)
            \right]
    \end{align}
    where $g^*$ denotes the Fenchel conjugate of $g$.
\end{theorem}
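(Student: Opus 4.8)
The statement to be established is the identity
\[
D_{g}(P\Vert\widehat P) - D_{g}^{\phi}(P\Vert\widehat P)
= \E_{\widehat P}\!\left[\mathrm{Breg}_{g}\!\left(d_{\phi}(\vx),\tfrac{p(\vx)}{\hat p(\vx)}\right)\right],
\]
i.e.\ the gap between the population $g$-Bregman (cross-entropy) functional and the one achieved by the parametrized family equals the expected pointwise Bregman distance between the estimated density ratio and the true one. The plan is to exploit the standard variational (Fenchel--Legendre) representation of $f$-divergences: for the generator $g(u)=u\log u+(u+1)\log(u+1)$, one has
$\E_{\widehat P}[g(p/\hat p)] = \sup_{f\in\mathcal M}\big(\E_{P}[f] - \E_{\widehat P}[g^{*}(f)]\big)$,
with the supremum attained at $f^{*}=g'(p/\hat p)$, and the density-ratio estimate recovered through the conjugate as $d_{\phi}=(g^{*})'\!\circ T_{\phi}$. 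First I would write both $D_{g}$ and $D_{g}^{\phi}$ in this dual form, so that $D_g$ uses the exact optimizer $f^{*}$ (over all measurable functions $\mathcal M$) and $D_g^{\phi}$ uses $T_{\phi}$ (over the restricted class), and then subtract.

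The key algebraic step is to recognize that the difference of the two variational expressions is, pointwise and in expectation under $\widehat P$, exactly a Bregman divergence. Concretely, using $g^{*}$'s properties --- $g'$ and $(g^{*})'$ are mutual inverses, and $g(u) + g^{*}(g'(u)) = u\,g'(u)$ (Fenchel equality) --- the integrand
$\big[f^{*}\,\tfrac{p}{\hat p}\hat p - g^{*}(f^{*})\hat p\big] - \big[T_{\phi}\,\tfrac{p}{\hat p}\hat p - g^{*}(T_{\phi})\hat p\big]$
collapses, after substituting $f^{*}=g'(p/\hat p)$ and $T_{\phi}=g'(d_{\phi})$ (equivalently $d_\phi=(g^*)'(T_\phi)$), to $\hat p\cdot\big[g(p/\hat p) - g(d_\phi) - g'(d_\phi)(p/\hat p - d_\phi)\big]$, which is precisely $\hat p\cdot\mathrm{Breg}_{g}(d_\phi, p/\hat p)$. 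Integrating over $\vx$ gives the claimed $\E_{\widehat P}[\mathrm{Breg}_g(d_\phi, p/\hat p)]$. So the proof is essentially: (1) dual representation of $D_g$ with explicit maximizer; (2) the same representation evaluated at the surrogate $T_\phi$; (3) subtract and simplify using the Fenchel identity to land on the Bregman form.

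I expect the main obstacle to be \emph{bookkeeping on which expectations are under $P$ versus $\widehat P$}, and keeping the change-of-measure $\E_P[\,\cdot\,]=\E_{\widehat P}[\tfrac{p}{\hat p}\,\cdot\,]$ straight throughout; a small sign/normalization slip here produces the wrong divergence. A secondary subtlety is justifying that the unrestricted supremum in $D_g$ is genuinely attained at $f^{*}=g'(p/\hat p)$ --- this requires $p/\hat p$ to be well-defined ($\widehat P$-a.e.\ finite, which holds on intersecting supports) and $g$ to be strictly convex and differentiable on $(0,\infty)$, which it is; these are exactly the regularity conditions that let the Fenchel equality be applied pointwise. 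I would also need to note that $D_g^{\phi}$ is defined as an infimum over the surrogate parameters of (minus) the cross-entropy objective, so the $\sup$/$\inf$ signs must be reconciled with the sign convention in the displayed definitions of $\mathcal D_g$ and $\mathcal D_g^{\phi}$ --- once that is fixed, steps (1)--(3) go through as above. The higher-dimensional, time-indexed version follows verbatim by applying the identity at each $t$ and integrating against $\lambda(t)$.
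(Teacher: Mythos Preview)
Your approach is correct and is the standard argument from the density-ratio estimation literature. The paper does not actually prove this identity itself: it quotes it as a known result with citations to Uehara et al., Sugiyama et al., and Verine et al., and the proof in those references is exactly the Fenchel-duality computation you sketch---write $D_g$ in variational form, evaluate at the optimal $f^{\star}=g'(p/\hat p)$ and at the surrogate $T_\phi=g'(d_\phi)$, subtract, and use the Fenchel--Young equality $g(u)+g^{*}(g'(u))=u\,g'(u)$ to recognize the remainder as a Bregman divergence. So there is no alternative paper proof to compare against; your plan coincides with the cited literature.

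One slip worth flagging: the integrand you (correctly) derive, $g(p/\hat p)-g(d_\phi)-g'(d_\phi)\,(p/\hat p-d_\phi)$, is $\mathrm{Breg}_g\big(p/\hat p,\,d_\phi\big)$ under the standard convention $\mathrm{Breg}_g(a,b)=g(a)-g(b)-g'(b)(a-b)$, not $\mathrm{Breg}_g\big(d_\phi,\,p/\hat p\big)$ as you label it to match the theorem statement. The paper's own downstream calculation in fact expands $\mathrm{Breg}_g(d_\phi,r_{\mathrm{opt}})$ as $g(d_\phi)-g(r_{\mathrm{opt}})-g'(r_{\mathrm{opt}})(d_\phi-r_{\mathrm{opt}})$, i.e.\ the \emph{other} argument order than what the Fenchel computation produces. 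Since $g$ is strictly convex both orderings give nonnegative quantities that vanish simultaneously, so the qualitative use of the identity in the surrounding argument is unaffected; but the literal equality as stated has its Bregman arguments reversed relative to what your derivation actually yields, and your caution about reconciling $\sup/\inf$ sign conventions should be extended to this point as well.
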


Suppose a discriminator was trained on minimizing the cross-entropy $D_{\text{g}}^{\phi}$ such that it is $\varepsilon$-close to the optimal cross-entropy $D_{\text{g}}$. Thus, we can write : \begin{equation} \label{eq:bregmaneps}
    D_{g}(P \Vert \hat{P}) - D_{g}^{\phi}(P \Vert \hat{P}) \leq \varepsilon
\end{equation}
We consider the case when, for all $(\vx) \sim \hat{P}$, $\text{Breg}_{g}(d_{\phi}(\vx),\frac{p(\vx)}{\hat{p}(\vx)}) \leq \varepsilon$
Suppose moreover that the estimated density ratio $d_{\phi}(\vx) = \frac{p(\vx)}{\hat{p(\vx)}} + h(\vx)$. We also note $\frac{p(\vx)}{\hat{p(\vx)}} = r_{\opt}(\vx)$. Thus, we can write, for $\vx \in \mathbb{R}$:
\[
    \text{Breg}_{g}\left(d_{\phi}(\vx),\frac{p(\vx)}{\hat{p}(\vx)}\right) = g(d_{\phi}(\vx)) - g(r_{\opt}(\vx)) - \nabla_{\vx_t} g(r_{\opt}(\vx))(d_{\phi}(\vx) - r_{\opt}(\vx)))
\]
With :
\begin{align}
    g(d_{\phi}(\vx)) & =\begin{aligned}[t] (r_{\opt}(\vx) & + h(\vx))\log(r_{\opt}(\vx) + h(\vx))                         \\
                                  & - (r_{\opt}(\vx) + h(\vx) +1 )\log(r_{\opt}(\vx) + h(\vx) +1)\end{aligned}
    \\
                     & =\begin{aligned}[t]
                            (r_{\opt}(\vx) & + h(\vx))\log (r_{\opt}(\vx))                                              \\
                                           & + (r_{\opt}(\vx) + h(\vx))\log\left(1+ \frac{h(\vx)}{r_{\opt}(\vx)}\right) \\
                                           & - (r_{\opt}(\vx) + h(\vx) +1)\log(r_{\opt}(\vx) + h(\vx) +1)\end{aligned}
    \\
                     & \leq
    \begin{aligned}              & (r_{\opt}(\vx) + h(\vx))\log(r_{\opt}(\vx)) + (r_{\opt}(\vx)+h)\frac{h(\vx)}{r_{\opt}(\vx)} \\
                             & - (r_{\opt}(\vx) + h(\vx) +1)\log(r_{\opt}(\vx) + h(\vx) +1)\end{aligned}
    \label{eq:dl}
\end{align}
Moreover,
\begin{align}
    g(r_{\opt}(\vx)) &= r_{\opt}(\vx)\log(r_{\opt}(\vx)) + (r_{\opt}(\vx) +1)\log(r_{\opt}(\vx)+1)\label{eq:sec-part} \end{align}
and the third term is given by
\begin{align}
    \nabla_{\vx_t} g(r_{\opt}(\vx))(d_{\phi}(\vx) - r_{\opt}(\vx))) & = (\log(r_{\opt}(\vx)) - \log(r_{\opt}(\vx)+1))h(\vx)\label{eq:grad-part}
\end{align}
Inequality \ref{eq:dl} results from the Taylor expansion of $\log(1+(\vx))$ when $(\vx)$ is close to zero, as $\log(1+(\vx)) \leq (\vx)$.
By summing the expressions \ref{eq:dl},\ref{eq:sec-part},\ref{eq:grad-part}, we obtain an upper bound on the $g$-Bregman divergence between the true and estimated density ratio :
\begin{align}
    \text{Breg}_{g}\left(d_{\phi}(\vx),r_{\opt}(\vx)\right) & \leq
    \begin{aligned}[t]
         & \left(1+\frac{h^{2}(\vx)}{r_{\opt}(\vx)} \right) + \left(r_{\opt}(\vx) +1 \right) \log\left( r_{\opt}+1\right) \\
         & + h(\vx)\log\left(r_{\opt}(\vx) +1 \right)                                                                     \\
         & - \left(r_{\opt}(\vx) + h(\vx) + 1\right)\log\left( r_{\opt}(\vx) + h(\vx) + 1\right)\end{aligned}
    \\
                                                            & =\begin{aligned}[t]
                                                                    & \left(h(\vx) + r_{\opt}(\vx)\right)\frac{h(\vx)}{r_{\opt}(\vx)}                            \\
                                                                    & +\left[r_{\opt}(\vx) + h(\vx) + 1\right]                                                   \\
                                                                    & \times\left[\log(r_{\opt}(\vx)+1) - \log(\left(r_{\opt}(\vx) + h(\vx) + 1\right))  \right]
                                                               \end{aligned}
    \\
                                                            & =\begin{aligned}[t]
                                                                   (h(\vx) & + r_{\opt}(\vx))\frac{h(\vx)}{r_{\opt}(\vx)}                                                            \\
                                                                           & +\left(r_{\opt}(\vx) + h(\vx) + 1\right)\log\left( 1 - \frac{h(\vx)}{r_{\opt}(\vx) + h(\vx) + 1}\right)
                                                               \end{aligned}
    \\
                                                            & \leq
    \begin{aligned}[t]
         & (h(\vx) + r_{\opt}(\vx))\frac{h(\vx)}{r_{\opt}(\vx)}                 \\
         & - (r_{\opt}(\vx) + h(\vx) +1)\frac{h(\vx)}{r_{\opt}(\vx)+h(\vx) + 1}
    \end{aligned}
    \\
                                                            & \leq \frac{h(\vx)^{2}}{r_{\opt}(\vx)}
\end{align}
One particular case of satisfaction of inequality \ref{eq:bregmaneps} is when all the elements i the expectation $\mathbb{E}_{\hat{P}}$ are below $\varepsilon$, that is when : \begin{align}
    \frac{h^{2}(\vx)}{r_{\opt}(\vx)} & \leq \varepsilon                      \\
    \Rightarrow h(\vx)               & \leq \sqrt{\varepsilon r_{\opt}(\vx)}
\end{align}
This bound is notably satisfied for $h(\vx) = \sin(\omega \vx)\sqrt{\varepsilon r_{opt}(\vx)} $, $\forall \omega \in \mathbb{R}$
\subsubsection{Computing the gain}
We will now show that the value of the $E_\phi$ (c.f Equation \ref{main-goal}) can go to infinity for an estimated density ratio that has an $\varepsilon$-optimal cross-entropy. For this, we write, for $r(\vx) = r_{opt}(\vx) + h(\vx)$, with $h(\vx) =\sin(\omega (\vx))\sqrt{\varepsilon r_{opt}(\vx)} $ :
This bound is notably satisfied for $h(\vx) = \sin(\omega (\vx))\sqrt{\varepsilon r_{\opt}(\vx)} $, $\forall \omega \in \mathbb{R}$
\subsection{Computing the gain}
We will now show that the value of the gain (c.f Equation \ref{main-goal}) can go to infinity for an estimated density ratio that has an $\varepsilon$-optimal cross-entropy. For this, we compute, for $d_{\phi}(\vx) = r_{\opt}(\vx) + h(\vx)$, with $h(\vx) =\sin(\omega (\vx))\sqrt{\varepsilon r_{\opt}(\vx)} $ :
\begin{align}
    \nabla_{\vx_t} \log(d_{\phi}(\vx)) = \nabla_{\vx_t} \log(r_{\opt}(\vx)) + \nabla_{\vx_t} \log\left( 1 + \sqrt{\frac{\varepsilon}{r_{\opt}(\vx)}} \sin(\omega (\vx)) \right)
\end{align}
Moreover, we have :
\begin{align}
    \nabla_{\vx_t} \log\left( 1 + \sqrt{\frac{\varepsilon}{r_{\opt}(\vx)}} \sin(\omega (\vx)) \right) = \frac{\sqrt{\varepsilon}\left(
        -\frac{1}{2}\nabla_{\vx_t} r_{\opt}(\vx)r_{\opt}^{-\frac{3}{2}}\sin(\omega (\vx)) + \sqrt{\frac{\varepsilon}{r_{\opt}(\vx)}} \omega \cos(\omega (\vx))
        \right)}{1 + \sqrt{\frac{\varepsilon}{r_{\opt}(\vx)}} \sin(\omega (\vx))}
\end{align}
Thus, the gain for a fixed timestep t is given by :
\begin{align}
    E_{\phi}^{t} & =\begin{aligned}
                        \mathbb{E}_{P_{t}}\left[
                        ||\nabla_{\vx_t} log (r_{\opt}(\vx)) - \nabla_{\vx_t} \log d_{\phi}(\vx) ||_{2}^{2}
                        \right]
                    \end{aligned}
    \\
                 & =\begin{aligned} \mathbb{E}_{P_{t}}\underbrace{\left|\left|\frac{\sqrt{\varepsilon}\left(
                            -\frac{1}{2}\nabla_{\vx_t} \left[r_{\opt}(\vx)\right]r_{\opt}^{-\frac{3}{2}}\sin(\omega (\vx)) + \sqrt{\frac{\varepsilon}{r_{\opt}(\vx)}} \omega \cos(\omega (\vx))
                            \right)}{1 + \sqrt{\frac{\varepsilon}{r_{\opt}(\vx)}} \sin(\omega (\vx))} \right|\right|_{2}^{2}}_{B_{\omega}(\vx)}
                    \end{aligned}
\end{align}
By setting $\omega$ to be very large, and using the following properties :
\begin{itemize}
    \item The set $\left\{ \vx \in\reals^d \middle\vert\cos(\omega \vx) = 0\right\})$ has a mass $0$ with respect to $P$.
    \item $\mathbb{E}_{P}[.] = P(r_{\opt}(\vx) = \infty)\mathbb{E}_{P}[. | r_{\opt}(\vx) = \infty] + P(r_{\opt}(\vx) < \infty) \mathbb{E}_{P}[. | r_{\opt}(\vx) < \infty]$
    \item $\mathbb{E}_{P}[. | r_{\opt}(\vx) = \infty] = 0$ and $P(r_{\opt}(\vx) < \infty) > 0$
\end{itemize}
We have that :
\begin{align}
    E_{\phi}^{t} & =
    P(r_{\opt}(\vx) = \infty) \mathbb{E}_{P_{t}}\left[B_{\omega}(\vx)
    \middle\vert r_{\opt}(\vx) = \infty\right] \notag                                                             \\
                 & \quad + P(r_{\opt}(\vx) < \infty) \mathbb{E}_{P_{t}}[B_{\omega}(\vx) | r_{\opt}(\vx) < \infty]
\end{align}
Thus, $E_{\phi}^{t} \to \infty$ as $\omega \to \infty$, which concludes our proof.
The effect of $\omega$ can be seen in Figures [\ref{fig:2Dplots},\ref{fig:2D-quivers},\ref{fig:2d_gradient}], where high values lose all the gradient information necessary for correcting the sampling process.
\begin{figure}[h]
    \centering
    \includegraphics[width=0.8\linewidth]{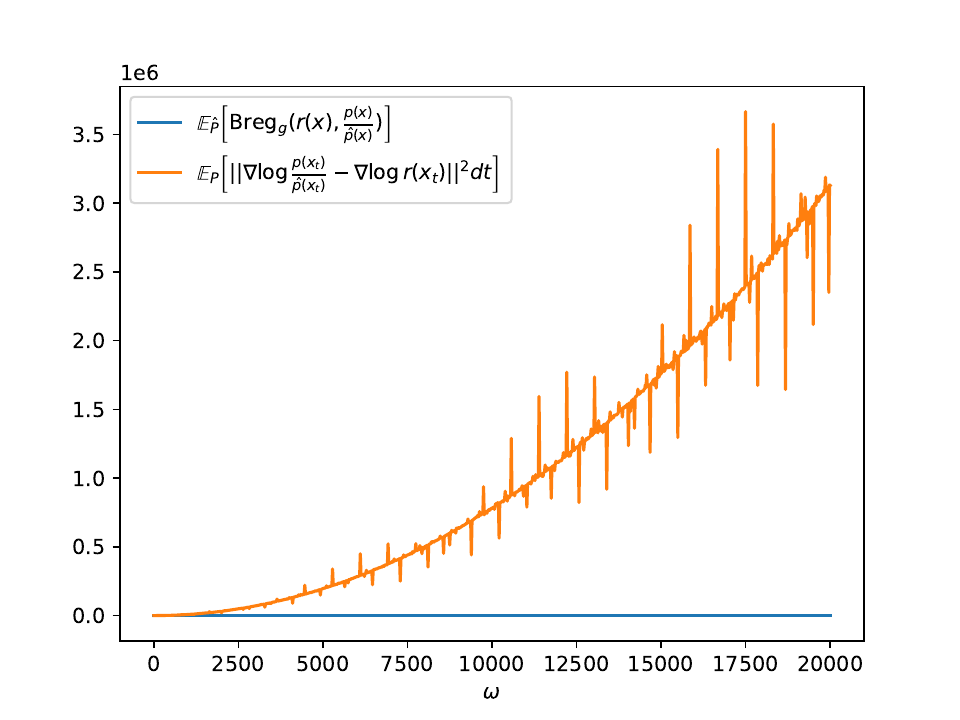}
    \caption{1-dimensional example : Density ratio between two Gaussians, with a discriminator having an $\varepsilon$-optimal cross-entropy and a gain that goes to infinity}
    \label{fig:1Dexple}
\end{figure}
\subsection{Proof of Theorem~\ref{thm:overfitting}}\label{app:sec:proofoverfitting}

\begin{proof}
    Define the measures $\tilde{\mu}=\min\left(P,\hat{P}\right),\mu_{P}=\max\left(0,P-\hat{P}\right)$
    and $\mu_{\hat{P}}=\max\left(0,\hat{P}-P\right)$. Observe that $P=\tilde{\mu}+\mu_{P}$,
    $\hat{P}=\tilde{\mu}+\mu_{\hat{P}}$ and that $\tilde{\mu}\left(\mathbb{R}\right)=1-TV$
    where $TV$ is the total variation distance between $P$ and $\hat{P}$.
    Define the distribution $\mu=\frac{\tilde{\mu}}{\tilde{\mu}(\mathbb{R})}$
    and let $f_{\mu}$ and $f_{\tilde{\mu}}$ be the density of $\mu$
    and $\tilde{\mu}$ with respect to Lebesgue measure. Note that because
    $\tilde{\mu}$ is not normalized, $f_{\tilde{\mu}}$ does not integrate
    to one.

    Let us build two sets $S$ and $S'$ iteratively using the following
    rejection sampling procedure: First, start with $S=S'=\emptyset$.
    Then, for each $i$, add $x_{i}$ (respectively $x'_{i}$) to the
    set $S$ (respectively $S'$) with probability $\frac{f_{\tilde{\mu}}(x_{i})}{p(x_{i})}$
    (respectively $\frac{f_{\tilde{\mu}}(x'_{i})}{\hat{p}(x'_{i})}$).
    For now, denote $M_{1}=\left|S\right|$ and $M_{2}=\left|S'\right|$.
    It is easy to check using standard properties of rejection sampling
    that $\mathbb{E}\left[M_{1}\right]=\mathbb{E}\left[M_{2}\right]=N\times(1-TV)$.
    Finally, take the largest of both sets $S$ and $S'$ and remove its
    last added elements until both sets have same cardinality $M=\min\left(M_{1},M_{2}\right)$.
    Note that in all three sets $S$,$S'$ and $S\cup S'$, examples are
    distributed i.i.d. from $\mu$.

    Let us define the mean square error:

    \begin{align*}
        \mathcal{L}_{MSE}^d(\phi) & =\mathbb{E}_{x\sim P}\left[\left(\nabla_{x}d^{\star}(x)-\nabla_{x}d_{\phi}(x)\right)^{2}\right]                                                                                                                \\
                                  & \ge\mathbb{E}\left[\left(\nabla_{x}d_{\phi}\right)^{2}\right]-2\mathbb{E}\left[\nabla_{x}d^{\star}.\nabla_{x}d_{\phi}\right]                                                                                   \\
                                  & \ge\mathbb{E}_{p}\left[\left(\nabla_{x}d_{\phi}\right)^{2}\right]-2\sqrt{\mathbb{E}_{p}\left[\left(\nabla_{x}d^{\star}\right)^{2}\right]}\sqrt{\mathbb{E}_{p}\left[\left(\nabla_{x}d_{\phi}\right)^{2}\right]}
    \end{align*}

    The last line follows from Cauchy-Schwartz inequality.

    We are interested into lower-bounding the \emph{expected} mean square
    error, noted $\mathbb{E}[ \mathcal{L}_{MSE}^d(\phi)]$, where the expectation is over the training
    set $\left\{ x_{1},x_{1}'\ldots\right\} $ on which the discriminator
    $d_{\phi}$ is trained. So we would like to bound it.

    To derive this bound, we will first lower bound $\mathbb{E}_{p}\left[\left(\nabla_{x}d_{\phi}\right)^{2}\right]$.

    First, consider an arbitrary interval $B\subset\mathbb{R}$ of size
    $\beta$ containing at least one point $z\in S$ and one point $z'\in S'$
    such that $z<z'$.

    Then we can write

    \begin{align*}
        \int_{\mathbb{R}}\left(\nabla d_{\phi}\right)^{2}dP(x) & \ge\int_{B}\left(\nabla d_{\phi}\right)^{2}dP(x)                                                                                     \\
                                                               & \ge\int_{B}\left(\nabla d_{\phi}\right)^{2}d\tilde{\mu}(x)=\int_{B}\left(\nabla d_{\phi}\right)^{2}f_{\tilde{\mu}}(x)dx              \\
                                                               & \ge\left(\inf_{x\in B}f_{\tilde{\mu}}(x)\right)\int_{B}\left(\nabla d_{\phi}\right)^{2}dx                                            \\
                                                               & \ge\left(\inf_{x\in B}f_{\tilde{\mu}}(x)\right)\int_{z}^{z'}\left(\nabla d_{\phi}\right)^{2}dx                                       \\
                                                               & =\left(\inf_{x\in B}f_{\tilde{\mu}}(x)\right)(z'-z)\int_{z}^{z'}\frac{\left(\nabla d_{\phi}\right)^{2}}{z'-z}dx                      \\
                                                               & \ge\left(\inf_{x\in B}f_{\tilde{\mu}}(x)\right)(z'-z)\left(\int_{z}^{z'}\frac{\nabla d_{\phi}}{z'-z}dx\right)^{2}\text{ (by Jensen)} \\
                                                               & =\frac{\inf_{x\in B}f_{\tilde{\mu}}(x)}{z'-z}\left(\int_{z}^{z'}\nabla d_{\phi}dx\right)^{2}                                         \\
                                                               & \ge\frac{\inf_{x\in B}f_{\tilde{\mu}}(x)}{\beta}\left(\int_{z}^{z'}\nabla d_{\phi}dx\right)^{2}
    \end{align*}

    Because both $p$ and $\hat{p}$ are L-Lipschitz, $f_{\tilde{\mu}}$
    has also this Lipschitz property. So for any $u\in B$ we have $f_{\tilde{\mu}}(u)-\inf_{x\in B}f_{\tilde{\mu}}(x)\le L\beta$,
    so $\inf_{x\in B}f_{\tilde{\mu}}(x)+L\beta\ge f_{\tilde{\mu}}(u)$,
    so $\inf_{x\in B}f_{\tilde{\mu}}(x)+L\beta\ge\frac{1}{\beta}\int_{B}f_{\tilde{\mu}}(x)dx=\frac{\tilde{\mu}\left(B\right)}{\beta}$.So
    we can write

    \[
        \int_{B}\nabla d_{\phi}^{2}dP(x)\ge\left(\frac{\tilde{\mu}\left(B\right)}{\beta^{2}}-L\right)\left(\int_{z}^{z'}\nabla d_{\phi}dx\right)^{2}
    \]

    Note that if $z>z'$ we would get by the same reasoning

    \[
        \int_{B}\nabla d_{\phi}^{2}dP(x)\ge\left(\frac{\tilde{\mu}\left(B\right)}{\beta^{2}}-L\right)\left(\int_{z'}^{z}\nabla d_{\phi}dx\right)^{2}
    \]

    To bound $\int_{z}^{z'}\nabla d_{\phi}dx$, recall that the cross-entropy
    for each $i$ is such that $\log\left(1+e^{-d_{\phi}(x_{i})}\right)\le\epsilon$
    and $\log\left(1+e^{d_{\phi}(x_{i}')}\right)\le\epsilon$. Thus,
    $1+e^{-d_{\phi}(x_{i})}\le e^{\epsilon}$ and $1+e^{d_{\phi}(x_{i}')}\le e^{\epsilon}$.
    So $d_{\phi}(x_{i})\ge\alpha$ and $d_{\phi}(x_{i}')\le-\alpha$
    where $\alpha=-\log\left(e^{\epsilon}-1\right)$. This also applies
    to $z$ and $z'$, so $\int_{\min(z,z')}^{\max(z,z')}\nabla d_{\phi}dx=\pm2\alpha$.
    Finally, we can summarize our findings by:

    \[
        \text{if both \ensuremath{S} and \ensuremath{S'} intersect with \ensuremath{B} then}\int_{B}\nabla d_{\phi}dP(x)\ge\left(\frac{\tilde{\mu}\left(B\right)}{\beta^{2}}-L\right)4\alpha^{2}
    \]

    Our goal now will be to show that there exists a small enough interval
    $B$ containing two points $z$ and $z'$ from $S$ and $S'$ with
    high probability.

    First, let us build an interval $A$ such that $\mu\left(A\right)\ge\frac{1}{2}$.
    By Chebyshev's inequality, we have that $\mu\left(\left\{ x:\left|x-\mathbb{E}_{\mu}x\right|\ge b\right\} \right)\le\frac{Var_{\mu}}{b^{2}}$
    for any $b$, so using the bound on the variance of lemma \ref{lem:variance_of_mu},
    we get that there exists a finite interval $A=\left[\mathbb{E}_{\mu}X-b,\mathbb{E}_{\mu}X+b\right]$
    where $b=\frac{\min\left(\sqrt{Var_{P}},\sqrt{Var_{\hat{P}}}\right)}{1-TV}$
    such that $\mu\left(A\right)\ge\frac{1}{2}$.

    Next, because $\mu(A)\ge\frac{1}{2}$, for any $0<\beta<2b$ there
    exists an interval $B\subset A$ of size $\beta$ such that $\mu\left(B\right)\ge\frac{\beta}{4b}$.
    Let us bound the probability that both $S$ and $S'$ intersect with
    $B$:

    \begin{align*}
        \mathbb{P}\left[B\cap S\neq\emptyset\wedge B\cap S'\neq\emptyset\right]= & 1-\mathbb{P}\left[B\cap S=\emptyset\vee B\cap S'=\emptyset\right]                                                                                       \\
        =                                                                        & 1-\mathbb{P}\left[B\cap S=\emptyset\right]-\mathbb{P}\left[B\cap S'=\emptyset\right]+\mathbb{P}\left[B\cap S'=\emptyset\wedge B\cap S'=\emptyset\right] \\
        =                                                                        & 1-\mu\left(\bar{B}\right)^{M}-\mu\left(\bar{B}\right)^{M}+\mu\left(\bar{B}\right)^{2M}                                                                  \\
        \ge                                                                      & 1-2\left(1-\mu\left(B\right)\right)^{M}\ge1-2\exp\left(-M\mu\left(B\right)\right)                                                                       \\
        \ge                                                                      & 1-2\exp\left(-\frac{M\beta}{4b}\right)
    \end{align*}

    For any non negative random variable $Z$, we know by the law of total
    expectation that $\mathbb{E}Z\ge\mathbb{E}\left[Z\mid condition\right]\mathbb{P}\left(condition\right)$,
    so in our case we have (here the expectation is over the dataset and
    the rejection sampling procedure):

    \begin{align*}
        \mathbb{E}\int_{B}\left(\nabla d_{\phi}\right)^{2}dP(x)\ge & \mathbb{E}\left[\int_{B}\left(\nabla d_{\phi}\right)^{2}dP(x)\mid B\cap S\neq\emptyset\wedge B\cap S'\neq\emptyset\right]      \\
                                                                   & \times\mathbb{P}\left[B\cap S\neq\emptyset\wedge B\cap S'\neq\emptyset\right]                                                  \\
        \ge                                                        & \left(\frac{\tilde{\mu}\left(B\right)}{\beta^{2}}-L\right)4\alpha^{2}\times\left(1-2\exp\left(-\frac{M\beta}{4b}\right)\right) \\
        \ge                                                        & \left(\frac{\mu\left(B\right)(1-TV)}{\beta^{2}}-L\right)4\alpha^{2}\times\left(1-2\exp\left(-\frac{M\beta}{4b}\right)\right)   \\
        \ge                                                        & \left(\frac{1-TV}{4b\beta}-L\right)4\alpha^{2}\times\left(1-2\exp\left(-M\frac{\beta}{4b}\right)\right)
    \end{align*}

    Choosing $\beta=\frac{b}{M}4\log4$ we get a $\left(1-2\exp\left(-M\frac{\beta}{4b}\right)\right)=1/2$
    and
    \begin{align*}
        \mathbb{E}\int_{B}\left(\nabla d_{\phi}\right)^{2}dP(x) & \ge\mathbb{E}\left(\frac{1-TV}{4b\beta}-L\right)2\alpha^{2}                                                         \\
                                                                & =\left(\frac{(1-TV)\mathbb{E}\left[M\right]}{8b^{2}\log4}-2L\right)\left(\log\left(e^{\epsilon}-1\right)\right)^{2}
    \end{align*}

    It is known that $\mathbb{E}M=\mathbb{E}\min\left(M_{1},M_{2}\right)\ge\frac{1}{2}\mathbb{E}M_{1}=\frac{N\times(1-TV)}{2}$.
    So we finally get\\
    \begin{align*}
        \mathbb{E}\int_{\infty}\left(\nabla d_{\phi}\right)^{2}dP(x) & \ge\left(N\frac{(1-TV)^{2}}{16b^{2}\log4}-2L\right)\left(\log\left(e^{\epsilon}-1\right)\right)^{2}                                \\
                                                                     & =\left(N\frac{(1-TV)^{4}}{16\log4\min\left(Var_{P},Var_{\hat{P}}\right)}-2L\right)\left(\log\left(e^{\epsilon}-1\right)\right)^{2}
    \end{align*}

    Finally, we can bound the expected MSE:

    \begin{align*}
        \mathbb{E}\left[\mathcal{L}_{MSE}^d(\phi) \right] & \ge\mathbb{E}_{p}\left[\left(\nabla_{x}d_{\phi}\right)^{2}\right]-2\sqrt{\mathbb{E}_{p}\left[\left(\nabla_{x}d^{\star}\right)^{2}\right]}\sqrt{\mathbb{E}_{p}\left[\left(\nabla_{x}d_{\phi}\right)^{2}\right]} \\
                                                          & \ge\mathbb{E}_{p}\left[\left(\nabla_{x}d_{\phi}\right)^{2}\right]-2\sqrt{\mathbb{E}_{p}\left[\left(\nabla_{x}d^{\star}\right)^{2}\right]}\sqrt{\mathbb{E}_{p}\left[\left(\nabla_{x}d_{\phi}\right)^{2}\right]} \\
    \end{align*}
\end{proof}
\begin{lemma}
    \label{lem:variance_of_mu}Let $P$ and $\hat{P}$ be two distributions
    over $\mathbb{R}$ and let $\tilde{\mu}=\min\left(P,\hat{P}\right)$.
    Then
    \[
        Var_{\mu}\le\frac{1}{2(1-TV)^{2}}\min\left(Var_{P},Var_{\hat{P}}\right)
    \]
\end{lemma}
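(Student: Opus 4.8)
The plan is to exploit two elementary facts about the overlap measure. First, variance is translation-invariant, so I may recenter $P$ and $\widehat{P}$ together without changing $TV$, $Var_P$, $Var_{\widehat P}$ or $Var_\mu$. Second, the unnormalized overlap $\tilde\mu=\min(P,\widehat P)$ is dominated by \emph{both} $P$ and $\widehat P$ (i.e.\ $f_{\tilde\mu}\le p$ and $f_{\tilde\mu}\le\hat p$ a.e.), with total mass $\tilde\mu(\mathbb{R})=1-TV$, and $\mu=\tilde\mu/(1-TV)$. Since $Var_\mu=\inf_{a\in\mathbb{R}}\int(x-a)^2\,d\mu(x)$, I would start from
\[
Var_\mu\;\le\;\int(x-a)^2\,d\mu(x)\;=\;\frac{1}{1-TV}\int(x-a)^2\,d\tilde\mu(x)\;\le\;\frac{1}{1-TV}\int(x-a)^2\,dP(x)
\]
and, choosing $a=\mathbb{E}_P X$ (WLOG $Var_P\le Var_{\widehat P}$), this already gives $Var_\mu\le \min(Var_P,Var_{\widehat P})/(1-TV)$. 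To reach the stated power $(1-TV)^2$ I would instead use the ``i.i.d.\ copies'' identity $Var_\mu=\tfrac12\iint(x-y)^2\,d\mu(x)\,d\mu(y)$, which upon substituting $d\mu=d\tilde\mu/(1-TV)$ becomes the exact equality $Var_\mu=\frac{1}{2(1-TV)^2}\iint(x-y)^2\,d\tilde\mu(x)\,d\tilde\mu(y)$.

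The core estimate is then to bound $\iint(x-y)^2\,d\tilde\mu(x)\,d\tilde\mu(y)$ by $\min(Var_P,Var_{\widehat P})$. I would expand $(x-y)^2=(x-a)^2-2(x-a)(y-a)+(y-a)^2$ with $a=\mathbb{E}_P X$, control each quadratic term by the domination $d\tilde\mu\le dP$ (so $\int(x-a)^2\,d\tilde\mu\le Var_P$) and, where convenient, by $d\tilde\mu\le d\widehat P$, and bound the cross term by Cauchy--Schwarz using $\tilde\mu(\mathbb{R})=1-TV$. This yields $Var_\mu\le \frac{C}{(1-TV)^2}\min(Var_P,Var_{\widehat P})$ for an explicit constant $C$, and the final task is to drive $C$ down to $\tfrac12$.

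The main obstacle I anticipate is exactly that constant. The crude domination gives $\iint(x-y)^2\,d\tilde\mu\,d\tilde\mu\le(2-TV)\min(Var_P,Var_{\widehat P})$, i.e.\ a version of the lemma that is a factor two weaker than stated; to extract the extra $\tfrac12$ one must use more than $f_{\tilde\mu}\le p$ on a single variable — for instance the pointwise inequality $\min(a,b)\min(c,d)\le\tfrac12(ad+bc)$ applied to $f_{\tilde\mu}(x)f_{\tilde\mu}(y)$ on \emph{both} integration variables simultaneously, or a refinement of the bound on $\int(x-a)^2\,d\tilde\mu$ that invokes the $L$-Lipschitz regularity of $p,\hat p$ assumed in Theorem~\ref{thm:overfitting} — while carefully tracking the discrepancy between $\mathbb{E}_\mu X$, $\mathbb{E}_P X$ and $\mathbb{E}_{\widehat P} X$ (which the recentering step cannot make vanish since only the pair can be shifted jointly). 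Most of the effort would go into this gap; the weaker-constant bound is in any case sufficient for the Chebyshev argument in the proof of Theorem~\ref{thm:overfitting}, up to harmless changes of constants.
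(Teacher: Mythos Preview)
Your second approach---via the i.i.d.-copies identity $Var_\mu=\tfrac12\int\!\int(x-y)^2\,d\mu(x)\,d\mu(y)$ followed by the substitution $d\mu=d\tilde\mu/(1-TV)$---is exactly the paper's. The paper, however, does not expand $(x-y)^2$ or invoke Cauchy--Schwarz: since the integrand is nonnegative and $d\tilde\mu\le dP$ pointwise, it simply replaces $d\tilde\mu$ by $dP$ on \emph{both} integration variables at once, obtaining $\int\!\int(x-y)^2\,d\tilde\mu\,d\tilde\mu\le\int\!\int(x-y)^2\,dP\,dP$, and likewise with $\hat P$.

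Your worry about the extra factor $\tfrac12$ is justified, but not because the argument needs sharpening. The paper's last line contains a slip: it writes $\int\!\int(x-x')^2\,dP\,dP=Var_P$, whereas the correct value is $2Var_P$. The argument therefore actually delivers $Var_\mu\le\frac{1}{(1-TV)^2}\min(Var_P,Var_{\hat P})$, and the lemma as stated (with the $\tfrac12$) is in fact false---take $P=\hat P$, so $TV=0$ and $\mu=P$, and the claim would read $Var_P\le\tfrac12 Var_P$. There is thus no missing trick to extract; your closing remark that the weaker-constant bound already suffices for the Chebyshev step in the proof of Theorem~\ref{thm:overfitting} is the correct resolution.
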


\begin{proof}
    Define $\mu=\frac{\tilde{\mu}}{\tilde{\mu}(\mathbb{R})}$. As before,
    $\mu=\frac{\tilde{\mu}}{1-TV}$, so we have

    \begin{align*}
        Var_{\mu} & =\mathbb{E}_{X\sim\mu}\left[(X-\mathbb{E}X)^{2}\right]=\frac{1}{2}\mathbb{E}_{X,X'\sim\mu}\left[(X-X')^{2}\right] \\
                  & =\frac{1}{2}\int\int(x-x')^{2}d\mu(x)d\mu(x')                                                                     \\
                  & =\frac{1}{2(1-TV)^{2}}\int\int(x-x')^{2}d\tilde{\mu}(x)d\tilde{\mu}(x')                                           \\
                  & \le\frac{1}{2(1-TV)^{2}}\min\left(\int\int(x-x')^{2}dpdp,\int\int(x-x')^{2}d\hat{p}d\hat{p}\right)                \\
                  & =\frac{1}{2(1-TV)^{2}}\min\left(Var_{P},Var_{\hat{P}}\right)
    \end{align*}
\end{proof}

\subsection{Proof of Proposition~\ref{thm:mseloss}}\label{app:sec:proofmseloss}

\begin{proposition}
    Assume that $P$ and $\whP$ satisfy the assumptions detailed in Appendix~\ref{app:sec:assumptionssong}.
    Then, the following holds:
    \begin{equation}
        \argmin_\phi \calL_{\mathrm{SM}}^{d}(\phi) = \argmin_\phi \calL_{\mathrm{MSE}}^{d}(\phi).
    \end{equation}
\end{proposition}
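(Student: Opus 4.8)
The plan is to mimic the classical argument of \citet{vincent_connection_2011} connecting explicit score matching to denoising score matching, transported to the density-ratio setting. The key observation is that the two objectives $\calL_{\mathrm{SM}}^{d}(\phi)$ and $\calL_{\mathrm{MSE}}^{d}(\phi)$, viewed as functions of $\phi$, differ only by an additive term that does not involve $\phi$; hence they have the same set of minimizers. Since only the argmin is claimed, I do not even need to track the constant precisely — I only need to isolate the $\phi$-dependent part and show it is common to both losses.

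Concretely, I would fix $t$ and expand the squared norm in each integrand. For $\calL_{\mathrm{SM}}^{d}$, writing $\vu_t := \nabla_{\vx_t}\log p_t(\vx_t) - \vs_\theta(\vx_t,t)$ (the ``true'' target increment, equal to $\nabla_{\vx_t}\log p_t(\vx_t)/\whp_t(\vx_t)$ under the hypothesis $\nabla\log\whp_t=\vs_\theta$), the integrand is $\E_{P_t}\big[\Vert \vu_t\Vert^2\big] - 2\,\E_{P_t}\big[\langle \vu_t,\nabla_{\vx_t}\vd_\phi\rangle\big] + \E_{P_t}\big[\Vert\nabla_{\vx_t}\vd_\phi\Vert^2\big]$. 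For $\calL_{\mathrm{MSE}}^{d}$, writing $\vv_t := \nabla_{\vx_t}\log p_t(\vx_t\vert\vx_0) - \vs_\theta(\vx_t,t)$, the integrand is $\E_{P_0,P_{t|\vx_0}}\big[\Vert\vv_t\Vert^2\big] - 2\,\E_{P_0,P_{t|\vx_0}}\big[\langle\vv_t,\nabla_{\vx_t}\vd_\phi\rangle\big] + \E_{P_0,P_{t|\vx_0}}\big[\Vert\nabla_{\vx_t}\vd_\phi\Vert^2\big]$. The third terms agree because the marginal of $\vx_t$ under $P_0\otimes P_{t|\vx_0}$ is exactly $P_t$, and $\Vert\nabla_{\vx_t}\vd_\phi\Vert^2$ depends only on $\vx_t$. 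The first terms are both independent of $\phi$, so they can be discarded. The crux is the cross term: I must show
\[
    \E_{P_0,P_{t|\vx_0}}\big[\langle \nabla_{\vx_t}\log p_t(\vx_t\vert\vx_0)-\vs_\theta,\ \nabla_{\vx_t}\vd_\phi\rangle\big]
    \;=\;
    \E_{P_t}\big[\langle \nabla_{\vx_t}\log p_t(\vx_t)-\vs_\theta,\ \nabla_{\vx_t}\vd_\phi\rangle\big].
\]
Since the $\vs_\theta$ piece already matches (same marginal argument), this reduces to Vincent's identity $\E_{P_0,P_{t|\vx_0}}[\langle\nabla_{\vx_t}\log p_t(\vx_t\vert\vx_0),\vh(\vx_t)\rangle] = \E_{P_t}[\langle\nabla_{\vx_t}\log p_t(\vx_t),\vh(\vx_t)\rangle]$ applied with $\vh=\nabla_{\vx_t}\vd_\phi$: expand $p_t(\vx_t)=\int p_0(\vx_0)p_t(\vx_t|\vx_0)\,d\vx_0$, use $\nabla_{\vx_t}\log p_t(\vx_t|\vx_0)\,p_t(\vx_t|\vx_0)=\nabla_{\vx_t}p_t(\vx_t|\vx_0)$, swap the order of integration (justified by the regularity assumptions in Appendix~\ref{app:sec:assumptionssong}), and recognize $\int p_0(\vx_0)\nabla_{\vx_t}p_t(\vx_t|\vx_0)\,d\vx_0 = \nabla_{\vx_t}p_t(\vx_t) = p_t(\vx_t)\nabla_{\vx_t}\log p_t(\vx_t)$.

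Putting it together: $\calL_{\mathrm{SM}}^{d}(\phi) = \calL_{\mathrm{MSE}}^{d}(\phi) + C$ where $C = \int_0^T\lambda(t)\big(\E_{P_t}[\Vert\nabla\log p_t\Vert^2] - \E_{P_0,P_{t|\vx_0}}[\Vert\nabla\log p_t(\cdot|\vx_0)\Vert^2]\big)\,dt$ is a finite constant independent of $\phi$ (finiteness follows from the moment and Lipschitz assumptions, and from the Gaussianity of $P_{t|\vx_0}$ which makes $\nabla\log p_t(\vx_t|\vx_0)$ an affine function of $\vx_t$ with finite second moment). Adding a $\phi$-independent constant does not change the minimizer set, so $\argmin_\phi \calL_{\mathrm{SM}}^{d}(\phi) = \argmin_\phi \calL_{\mathrm{MSE}}^{d}(\phi)$. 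The main obstacle is purely technical: rigorously justifying the interchange of differentiation, expectation, and the $\vx_0$-integral in Vincent's identity — this is where the assumptions of Appendix~\ref{app:sec:assumptionssong} (in particular the $C^2$ regularity of densities, the polynomial growth/Lipschitz bounds on scores, and the tail decay of $p_t$) do the work; everything else is bookkeeping.
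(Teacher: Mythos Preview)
Your proposal is correct and follows essentially the same route as the paper's proof: both expand the squared norms, identify the common $\phi$-dependent pieces (the $\E_{P_t}[\Vert\nabla_{\vx_t}\vd_\phi\Vert^2]$ term and the cross term), invoke Vincent's identity to equate the cross terms, and conclude that the two losses differ only by a $\phi$-independent additive constant. Your write-up is in fact slightly more explicit than the paper's, since you spell out the form of the constant and note why it is finite under the stated regularity assumptions.
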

First, we can show that:
\begin{align}
    \calL_{\mathrm{MSE}}^{d}(\phi) = & \int_0^T\lambda(t)   \E_{P_0, P_{t\vert\vx_0}}\left[\Vert \nabla_{\vx_t} \log p_t(\vx_t\vert\vx_0) - \nabla_{\vx_t} \log \whp_t(\vx_t) - \nabla_{\vx_t} \vd_\phi(\vx_t, t)\Vert_2^2 \right]\dt                              \\
    =                                & \begin{aligned}[t]
                                           \int_0^T & \lambda(t) \E_{P_t}\left[\frac{1}{2}\left\Vert\nabla_{\vx_t} \vd_\phi(\vx_t, t)\right\Vert_2^2\right] \dt                                                                          \\
                                                    & + \int_0^T\lambda(t) \E_{P_0, P_{t\vert\vx_0}}\left[\left\langle \nabla_{\vx_t}d(\vx_t, t), \nabla_{\vx_t} \log \frac{p_t(\vx_t\vert\vx_0)}{\whp_t(\vx_t)}\right\rangle\right] \dt \\
                                                    & + \int_0^T\lambda(t) \E_{P_0, P_{t\vert\vx_0}}\left[\frac{1}{2}\left\Vert\nabla_{\vx_t} \log \frac{p_t(\vx_t\vert\vx_0)}{\whp_t(\vx)}\right\Vert_2^2\right] \dt
                                       \end{aligned} \\
    =                                & \int_0^T\lambda(t) \E_{P_t}\left[\frac{1}{2}\left\Vert\nabla_{\vx_t} \vd_\phi(\vx_t, t)\right\Vert_2^2\right] \dt + J(\phi) + C_1,
\end{align}
where $C_1$ is a constant independent of $\phi$. And we can write $J$ as:
\begin{align}
    J(\phi) = & \int_0^T\lambda(t) \E_{P_0, P_{t\vert\vx_0}}\left[\left\langle \nabla_{\vx_t}d(\vx_t, t), \nabla_{\vx_t} \log \frac{p_t(\vx_t\vert\vx_0)}{\whp_t(\vx_t)}\right\rangle\right] \dt                            \\
    =         & \begin{aligned}[t]
                    \int_0^T & \lambda(t) E_{P_0, P_{t\vert\vx_0}}\left[\left\langle \nabla_{\vx_t}d(\vx_t, t), \nabla_{\vx_t} \log p_t(\vx_t\vert\vx_0)\right\rangle\right] \dt     \\
                             & - \int_0^T\lambda(t) E_{P_0, P_{t\vert\vx_0}}\left[\left\langle \nabla_{\vx_t}d(\vx_t, t), \nabla_{\vx_t} \log \whp_t(\vx_t)\right\rangle\right] \dt.
                \end{aligned}
\end{align}
Using the result of \citet{vincent_connection_2011} (Equation 17), we can show the term in the first integral can be rewritten as:
\begin{align}
    E_{P_0, P_{t\vert\vx_0}}\left[\left\langle \nabla_{\vx_t}d(\vx_t, t), \nabla_{\vx_t} \log p_t(\vx_t\vert\vx_0)\right\rangle\right] =  E_{P_t}\left[\left\langle \nabla_{\vx_t}d_\phi(\vx_t, t), \nabla_{\vx_t} \log p_t(\vx_t)\right\rangle\right].
\end{align}
Thus, we can rewrite $J$ as:
\begin{align}
    J(\phi) = & \begin{aligned}[t]
                    \int_0^T & \lambda(t) E_{P_t}\left[\left\langle \nabla_{\vx_t}d_\phi(\vx_t, t), \nabla_{\vx_t} \log p_t(\vx_t)\right\rangle\right] \dt                          \\
                             & - \int_0^T\lambda(t) E_{P_0, P_{t\vert\vx_0}}\left[\left\langle \nabla_{\vx_t}d(\vx_t, t), \nabla_{\vx_t} \log \whp_t(\vx_t)\right\rangle\right] \dt
                \end{aligned} \\
    =         & \int_0^T\lambda(t) E_{P_t}\left[\left\langle \nabla_{\vx_t}d_\phi(\vx_t, t),\nabla_{\vx_t} \log \frac{p_t(\vx_t)}{\whp(\vx_t)}\right\rangle\right] \dt.\label{eq:J}
\end{align}
And on the other side, we have:
\begin{align}
    \calL_{SM}^d(\phi) = & \int_0^T\lambda(t) \E_{P_t}\left[\Vert \nabla_{\vx_t} \vd_\phi(\vx_t, t)\Vert_2^2\right] \dt                                                                                                     \\
    =                    & \begin{aligned}[t]
                               \int_0^T & \lambda(t) \E_{P_t}\left[\frac{1}{2}\left\Vert\nabla_{\vx_t} \vd_\phi(\vx_t, t)\right\Vert_2^2\right] \dt                                               \\
                                        & + \int_0^T\lambda(t) \E_{P_t}\left[\left\langle \nabla_{\vx_t}d(\vx_t, t), \nabla_{\vx_t} \log \frac{p_t(\vx_t)}{\whp_t(\vx_t)}\right\rangle\right] \dt \\
                                        & + \int_0^T\lambda(t) \E_{P_t}\left[\frac{1}{2}\left\Vert\nabla_{\vx_t} \log \frac{p_t(\vx_t)}{\whp_t(\vx_t)}\right\Vert_2^2\right] \dt
                           \end{aligned} \\
    =                    & \int_0^T\lambda(t) \E_{P_t}\left[\frac{1}{2}\left\Vert\nabla_{\vx_t} \vd_\phi(\vx_t, t)\right\Vert_2^2\right] \dt + J(\phi) + C_2
\end{align}
using Equation~\eqref{eq:J} and $C_2$ is a constant independent of $\phi$. Thus, we have $\calL_{\mathrm{MSE}}^{d}(\phi) = \calL_{\mathrm{SM}}^{d}(\phi) + C_3$, where $C_3$ is a constant independent of $\phi$. Thus, the minimizer of $\calL_{\mathrm{MSE}}^{d}(\phi)$ is the same as the minimizer of $\calL_{\mathrm{SM}}^{d}(\phi)$.

\section{Additional plots}
\label{app:sec:additionalplots}
\begin{figure}[htbp!]
    \centering
    \includegraphics[width=\linewidth]{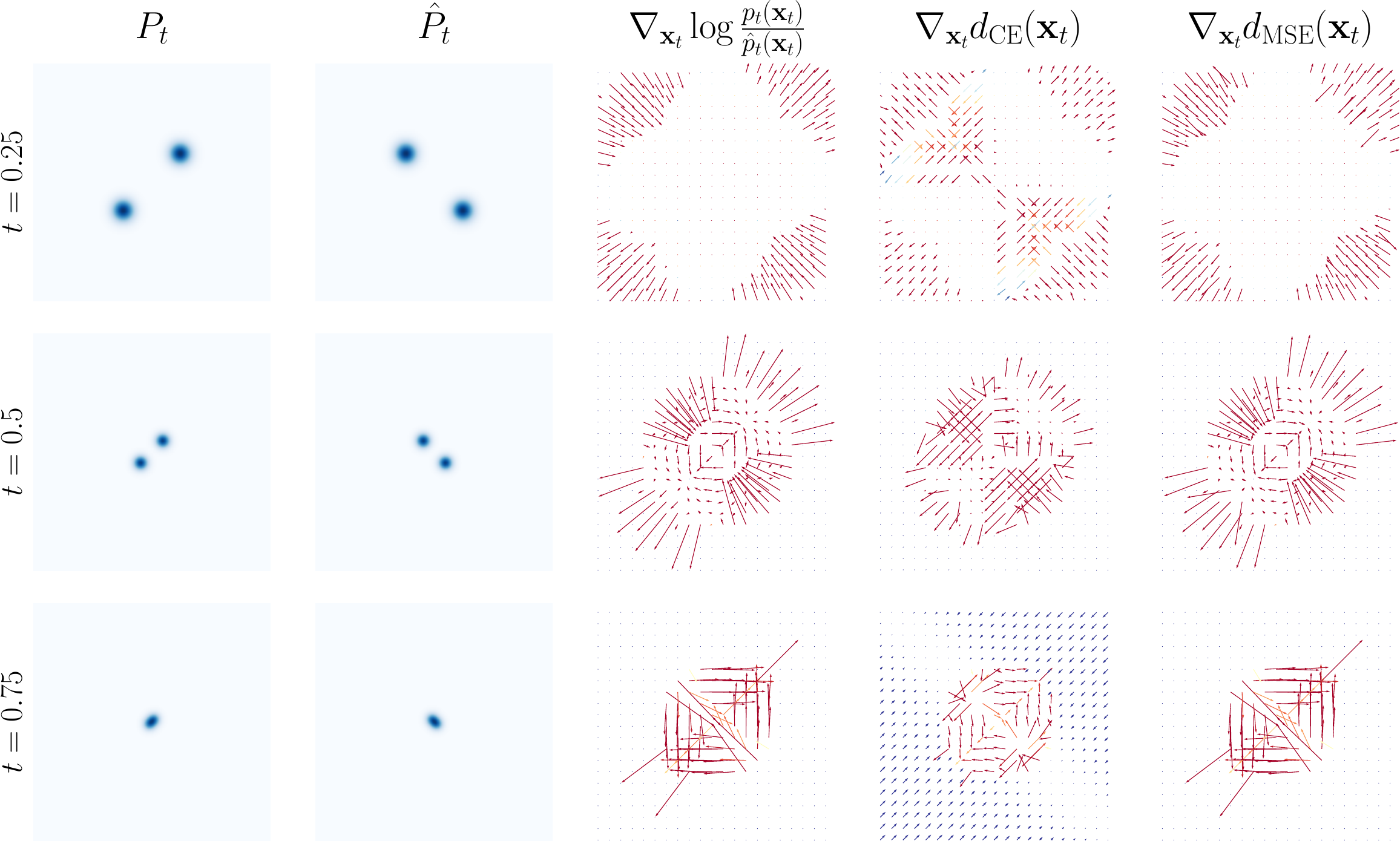}
    \caption{Gradients of the estimated density ratio. $d_{\mathrm{CE}}$ represents a discriminator with low cross entropy, and $d_{\mathrm{MSE}}$ represents a discriminator with low MSE}
    \label{fig:2D-quivers}
\end{figure}

\begin{figure*}[b!]
    \subfloat[EDM]{\includegraphics[width=0.32\textwidth]{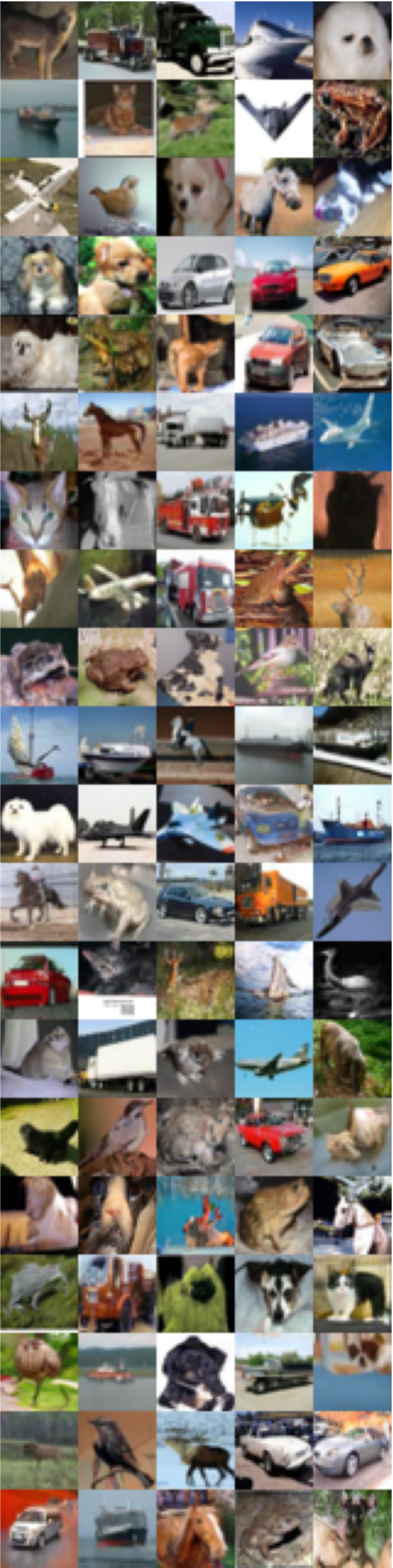}}\hfill
    \subfloat[EDM+DG]{\includegraphics[width=0.32\textwidth]{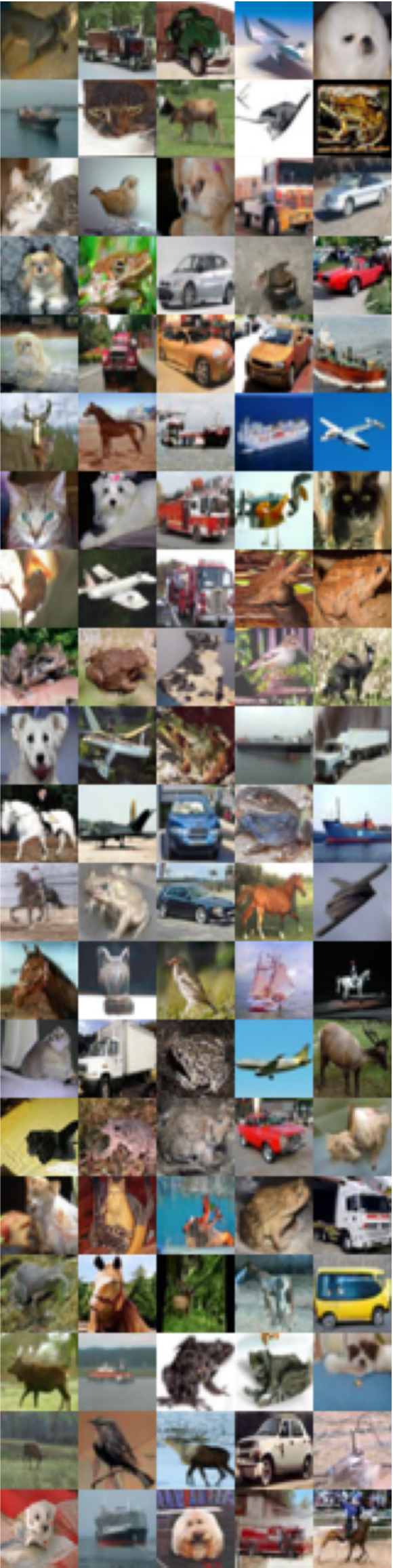}}\hfill
    \subfloat[EDM+Ours]{\includegraphics[width=0.32\textwidth]{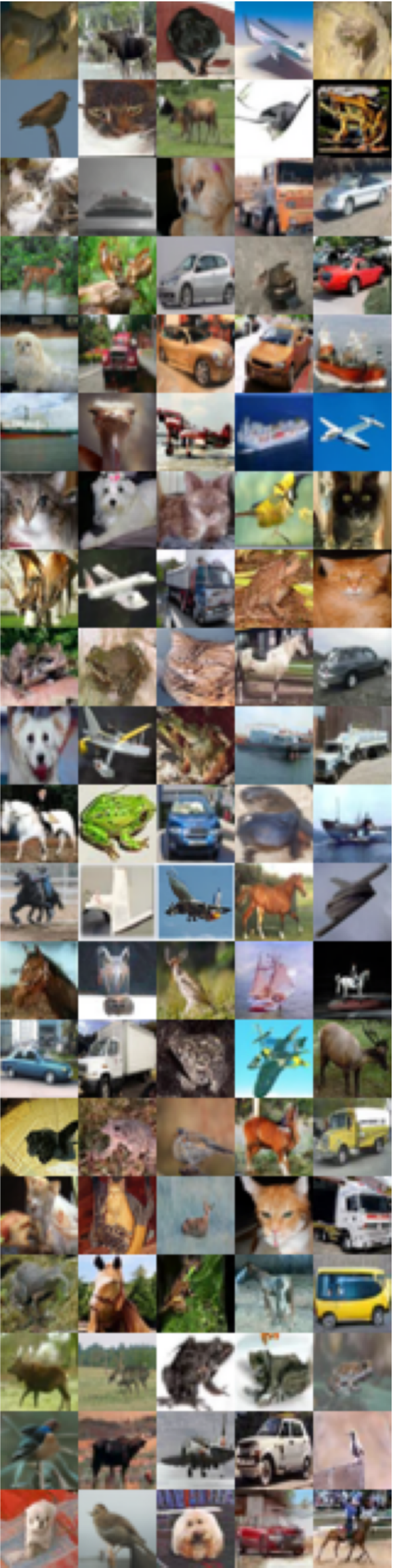}}
    \caption{Samples generated on the CIFAR-10 dataset.}
    \label{fig:cifarlarge}
\end{figure*}
\begin{figure*}[b!]
    \subfloat[EDM]{\includegraphics[width=0.32\textwidth]{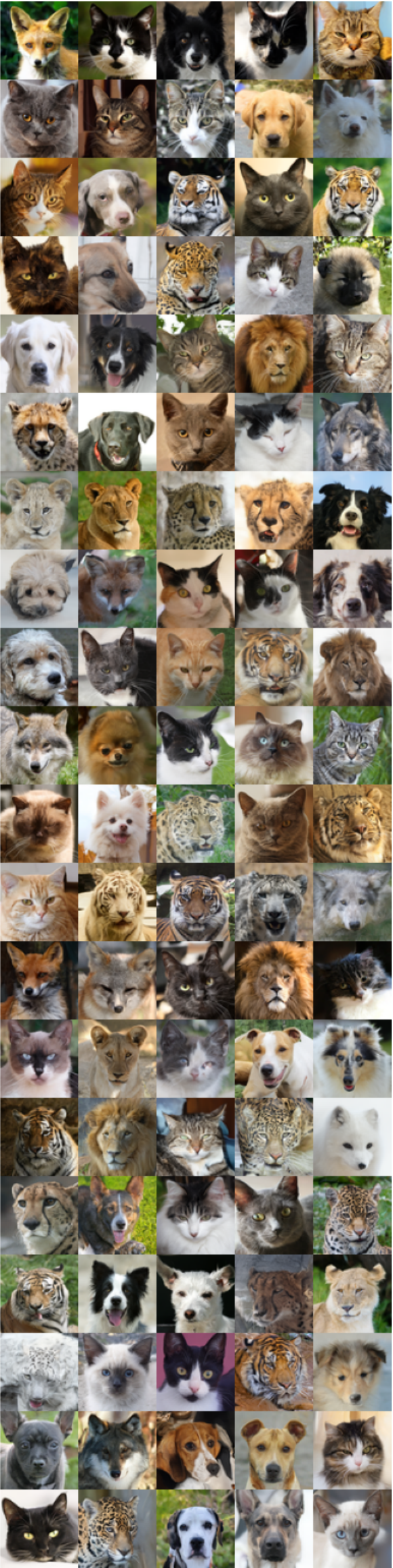}}\hfill
    \subfloat[EDM+DG]{\includegraphics[width=0.32\textwidth]{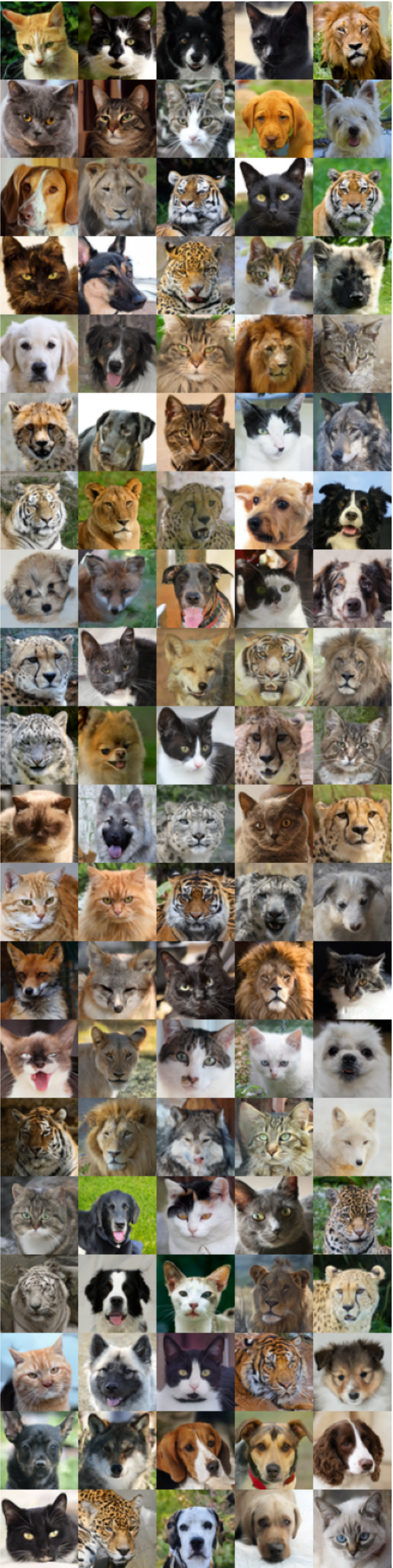}}\hfill
    \subfloat[EDM+Ours]{\includegraphics[width=0.32\textwidth]{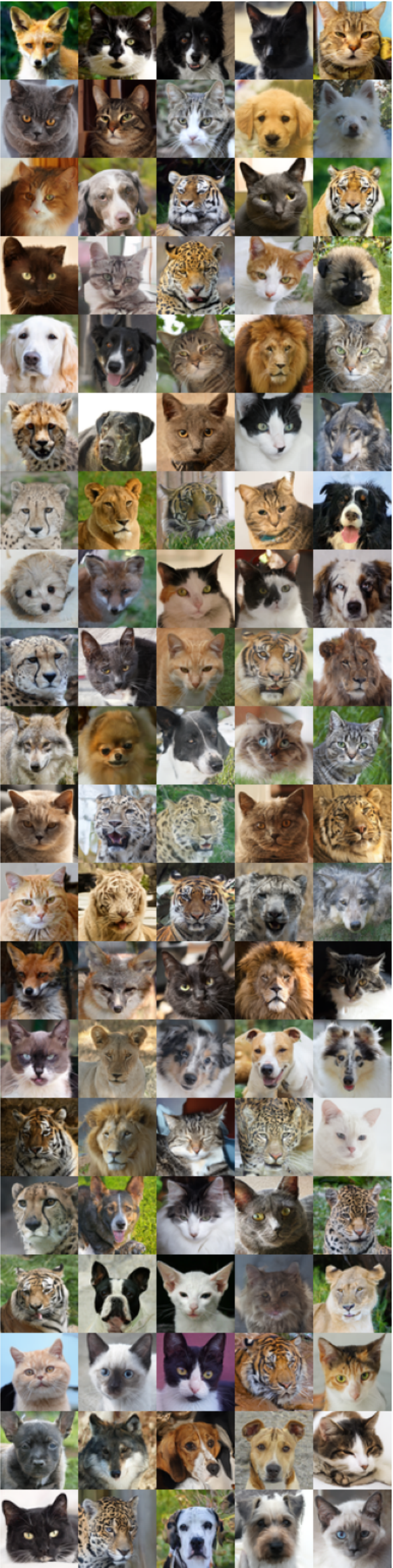}}
    \caption{Samples generated on the AFHQv2 dataset.}
    \label{fig:afhqlarge}
\end{figure*}

\begin{figure*}[b!]
    \subfloat[EDM]{\includegraphics[width=0.32\textwidth]{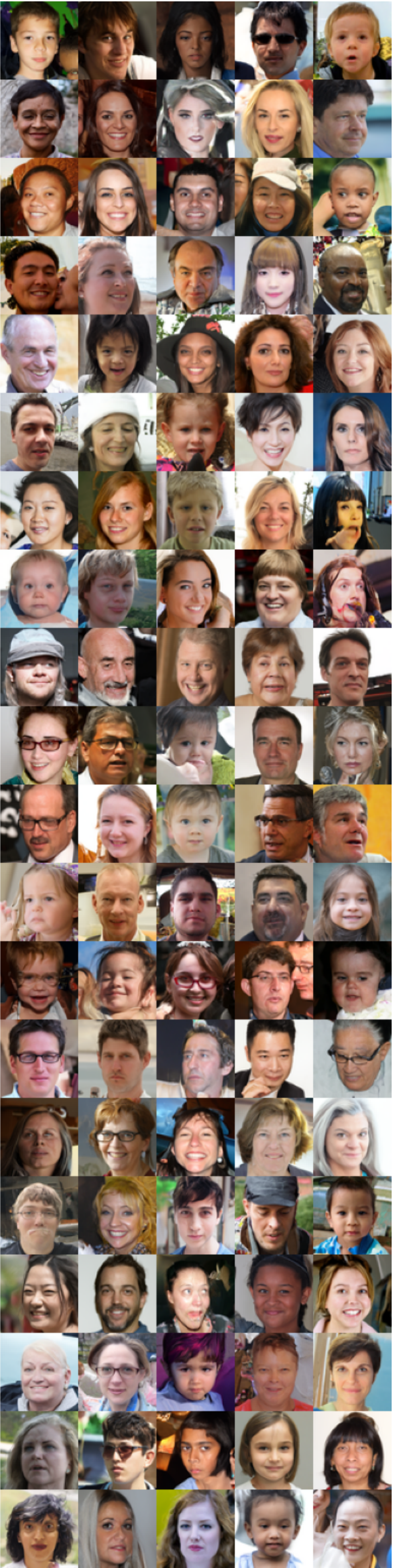}}\hfill
    \subfloat[EDM+DG]{\includegraphics[width=0.32\textwidth]{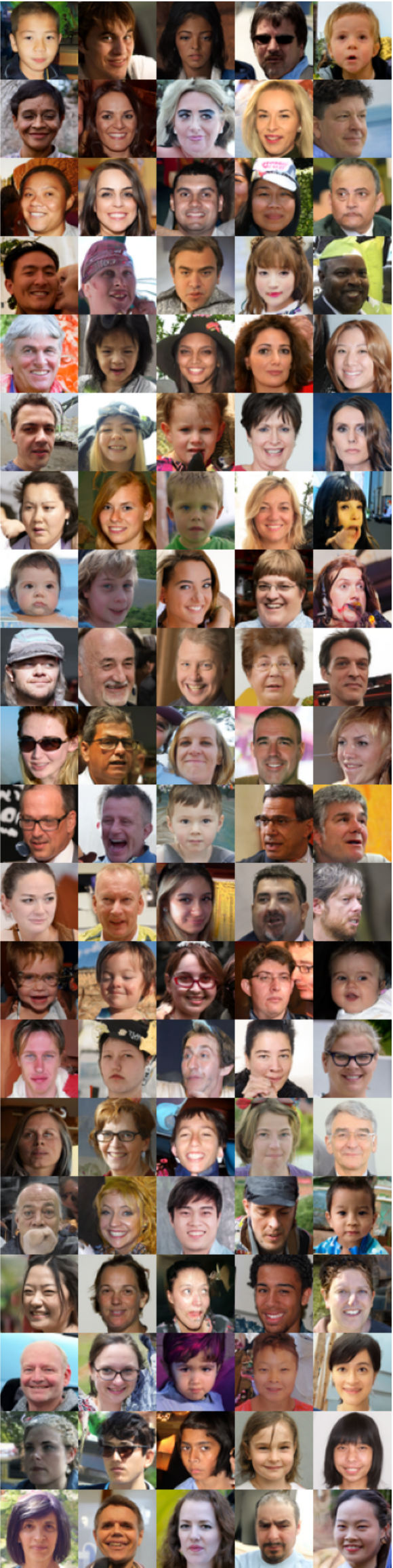}}\hfill
    \subfloat[EDM+Ours]{\includegraphics[width=0.32\textwidth]{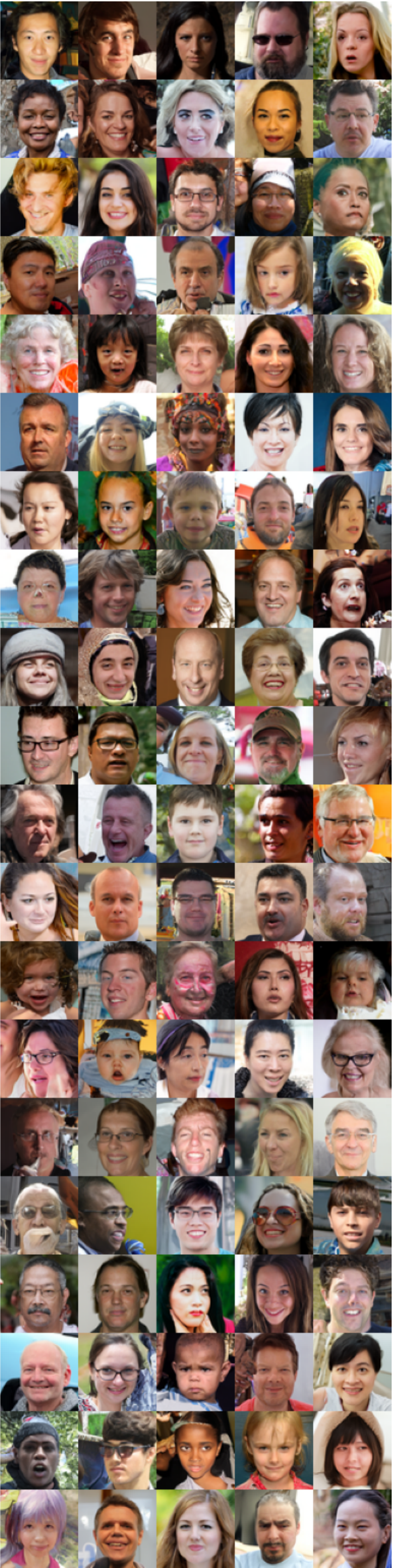}}
    \caption{Samples generated on the FFHQ dataset.}
    \label{fig:ffhq}
\end{figure*}

%




\end{document}